\documentclass{article} 
\usepackage{iclr2025_conference,times}


\usepackage{amsmath,amsfonts,bm}









\def\eqref#1{equation~\ref{#1}}









\def\1{\bm{1}}










\DeclareMathAlphabet{\mathsfit}{\encodingdefault}{\sfdefault}{m}{sl}
\SetMathAlphabet{\mathsfit}{bold}{\encodingdefault}{\sfdefault}{bx}{n}













\usepackage{hyperref}
\usepackage{url}

\usepackage{subcaption}
\usepackage{amsmath}
\usepackage{amssymb}
\usepackage{cleveref}
\usepackage{booktabs}
\usepackage{multirow}
\newcommand{\ie}{\emph{i.e.}}

\usepackage{algorithm}
\usepackage{algorithmic}

\usepackage{array}
\usepackage{sidecap}

\usepackage{textgreek}
\usepackage{tikz}
\usepackage{mathtools}
\usepackage{amsthm}
\newtheorem{definition}{Definition}
\newtheorem{theorem}{Theorem}
\newtheorem{lemma}{Lemma}
\newtheorem{corollary}{Corollary}

\usepackage{wrapfig}


\title{Long-tailed Adversarial Training with Self-Distillation}


\author{
Seungju Cho\thanks{These authors contributed equally.}\textsuperscript{$*$}, 
Hongsin Lee\footnotemark[1]\textsuperscript{$*$}, 
Changick Kim \\
Korea Advanced Institute of Science and Technology (KAIST)\\
{\normalsize \texttt{\{joyga, hongsin04, changick\}@kaist.ac.kr}} \\ 
}

%

\iclrfinalcopy 
\begin{document}

\maketitle

\begin{abstract}
 Adversarial training significantly enhances adversarial robustness, yet superior performance is predominantly achieved on balanced datasets.
 Addressing adversarial robustness in the context of unbalanced or long-tailed distributions is considerably more challenging, mainly due to the scarcity of tail data instances. 
 Previous research on adversarial robustness within long-tailed distributions has primarily focused on combining traditional long-tailed natural training with existing adversarial robustness methods.
 In this study, we provide an in-depth analysis for the challenge that adversarial training struggles to achieve high performance on tail classes in long-tailed distributions.
 Furthermore, we propose a simple yet effective solution to advance adversarial robustness on long-tailed distributions through a novel self-distillation technique.
 Specifically, this approach leverages a balanced self-teacher model, which is trained using a balanced dataset sampled from the original long-tailed dataset.
Our extensive experiments demonstrate state-of-the-art performance in both clean and robust accuracy for long-tailed adversarial robustness, with significant improvements in tail class performance on various datasets.
We improve the accuracy against PGD attacks for tail classes by 20.3, 7.1, and 3.8 percentage points on CIFAR-10, CIFAR-100, and Tiny-ImageNet, respectively, while achieving the highest robust accuracy.

\end{abstract}

\section{Introduction}
Recent studies have highlighted the vulnerabilities inherent in deep learning models when subjected to adversarial attacks \citep{FGSM, CW_attack, PGD, athalye2018obfuscated}.
These attacks exploit subtle changes in input data that can lead to drastically incorrect predictions, undermining model reliability in critical applications \citep{safe1, safe2, wang2023does}.
As a result, research efforts have focused on enhancing robustness against such adversarial threats, with various strategies being explored \citep{2017_jpeg_defense, xie2019feature, cohen2019certified, carmon2019unlabeled, zhang2022adversarial, jin2023randomized}.
Among these, adversarial training \citep{FGSM, PGD} has proven to be one of the most effective methods for enhancing model robustness \citep{pang2020bag, bai2021recent, wei2023cfa}.
However, many existing studies primarily validate their approaches on balanced datasets, overlooking the practical scenarios where data is inherently imbalanced or long-tailed.
This gap underscores the need for novel adversarial training methodologies capable of addressing these more complex data distributions effectively.

\begin{figure}[t]
\centering
    \includegraphics[width=1\textwidth] {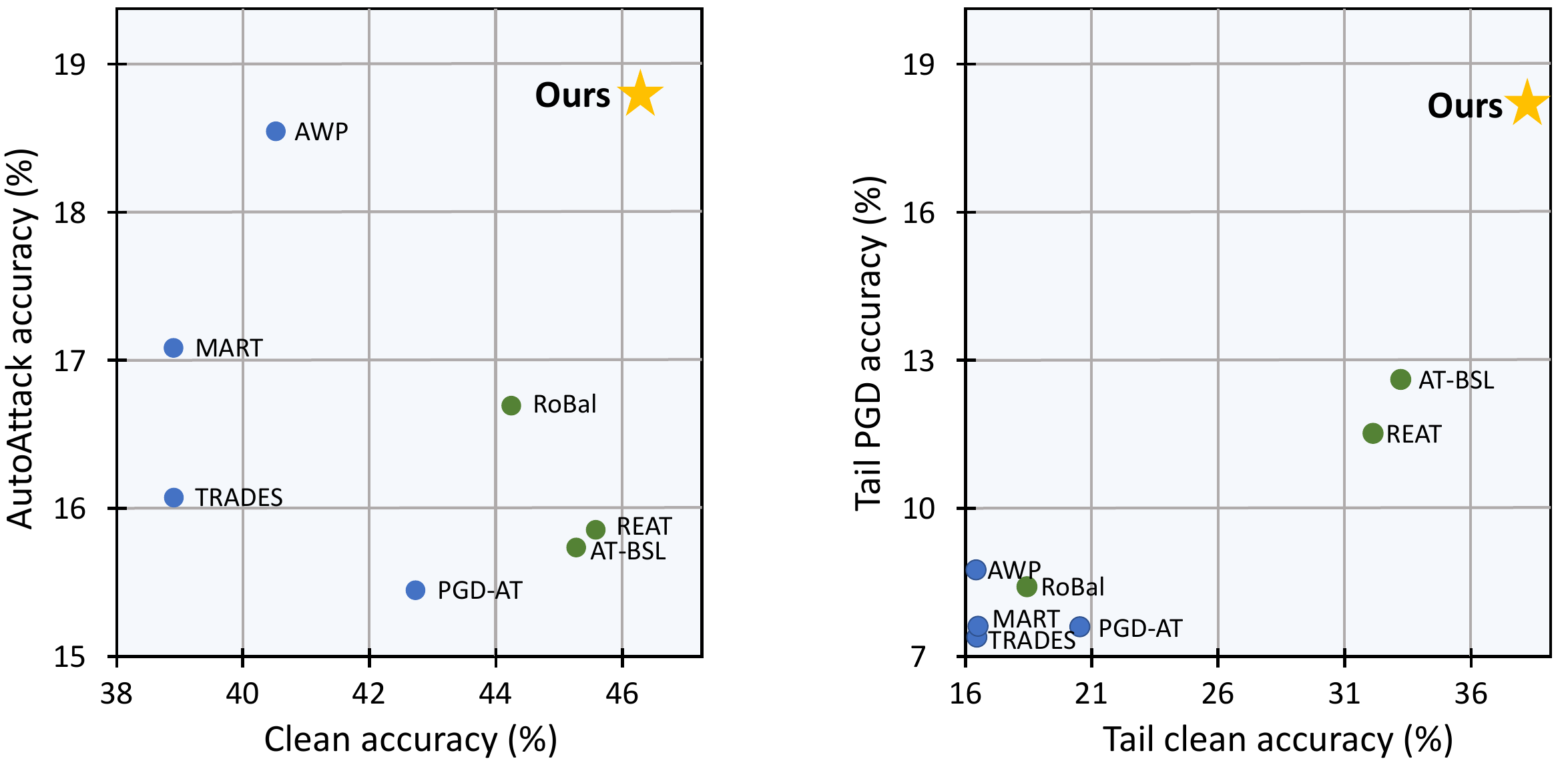}\\
    \vspace{-0.5cm}
    \begin{minipage}[t]{.485\textwidth}
      \subcaption{Performance on all classes}   
               \label{fig:intro1-a}
    \end{minipage}%
    \begin{minipage}[t]{.485\textwidth}
        \subcaption{Performance on tail classes}
         \label{fig:intro1-b}
    \end{minipage}%
    \caption{\textbf{(a)} The overall clean accuracy and AutoAttack \citep{AutoAttack} accuracy of various adversarial training methods (\textit{blue circles}) and long-tailed adversarial training methods (\textit{green circles}) using the ResNet-18 \citep{he2016deep} architecture on CIFAR-100-LT \citep{krizhevsky2009learning}. 
    \textbf{(b)} The clean accuracy and 20-step PGD attack \citep{PGD} accuracy on tail classes for the same set of methods.
    Our method (\textit{yellow star}) surpasses all existing methods, achieving a notable improvement on tail classes.
    }
    \label{fig:intro1}
\end{figure}

While numerous studies \citep{cao2019learning, cui2019class, kang2019decoupling, zhou2020bbn, li2021self, alshammari2022long, du2023global} have addressed long-tailed distributions without considering robustness, the intersection of adversarial training and long-tailed distributions \citep{Wu_2021_CVPR, li2023alleviating, Yue_2024_CVPR} has received far less attention.
Existing research in this area primarily combines traditional long-tailed classification techniques with basic adversarial training methods, such as PGD adversarial training \citep{PGD} and TRADES loss \citep{TRADES} with balanced softmax \citep{Wu_2021_CVPR, Yue_2024_CVPR}.
Despite combining such methods, existing approaches still demonstrate low performance on tail classes with fewer samples in long-tailed distributions.
We find that their high robustness primarily stems from the improved robustness of head classes, which have a larger number of samples.
This highlights the need for more advanced research on adversarial training in long-tailed distributions.

In this paper, we provide an in-depth analysis of why adversarial training in long-tailed distributions is particularly challenging, focusing on the performance on tail classes.
Through theoretical analysis, we show that adversarial training causes more severe performance degradation in tail classes compared to natural training.
This highlights the inherent difficulty of achieving high robustness in long-tail distributions, especially for the tail classes.
Building on these insights, we propose a novel two-step framework designed to improve tail class robustness under adversarial training on long-tailed distribution. 

Our framework consists of constructing a balanced dataset from a given unbalanced dataset and employing self-distillation.
We first create a sub-dataset where each class contains an equal number of data samples, referred to as the balanced sub-dataset.
Then, we adversarially train a self-teacher model on this balanced dataset, achieving higher robustness in tail classes than models trained on the full long-tailed dataset.
Subsequently, we apply self-distillation using the balanced self-teacher model to improve tail class performance, resulting in significant gains over baseline models.
As shown in \Cref{fig:intro1-a}, 
our method achieves the highest accuracy against AutoAttack \citep{AutoAttack} and demonstrates significant performance improvements, particularly on tail classes as in \Cref{fig:intro1-b}.
Here are our key contributions:
\begin{itemize}
\item 
We conduct an in-depth analysis to explain why adversarial training on long-tailed datasets results in poor tail class performance. Our findings show that, despite adversarial training, tail class robustness is even lower than natural training.
\item Based on these insights, we propose a novel two-step adversarial training approach specifically designed for long-tailed datasets.
This method improves upon baselines that merely combine existing long-tailed classification techniques with adversarial training.
\item Our approach achieves state-of-the-art performance in adversarial training on long-tailed datasets across various architectures, datasets, and imbalance ratios, leading to significant enhancements in both clean and robust accuracy, with particularly notable improvements on tail classes.
\end{itemize}

\section{Related Works}
\subsection{Adversarial Training and Distillation}

In response to adversarial attacks \citep{FGSM, CW_attack, PGD, athalye2018obfuscated}, 
adversarial training \citep{FGSM, PGD} empirically stands out as one of the most effective.
Adversarial training defines optimization as a min-max problem, where inner maximization generates adversarial inputs, and outer minimization trains the model on these adversarial samples. 
TRADES \citep{TRADES} incorporates the Kullback-Leibler (KL) divergence loss between the logits of clean and adversarial images.
MART \citep{MART} introduces per-sample weights based on the confidence of each sample.
These two methods are used as baseline methods for other recent adversarial training research \citep{qin2019adversarial, 2020_awp,bai2021improving, jin2022enhancing, tack2022consistency, jin2023randomized,wei2023cfa}. 

The superior performance of adversarial training is primarily observed in large architecture networks, motivating research efforts to improve performance in smaller architectures using techniques such as distillation.
Adversarial Robust Distillation (ARD) \citep{ard} proposes a loss function that guides the adversarial output of the student model towards the natural output of the teacher, similar to TRADES \citep{TRADES}.
Robust Soft Labels Adversarial Distillation (RSLAD) \citep{rslad} leverages teacher logits to improve performance through inner maximization in adversarial training.
Many other studies leverage the teacher's logits \citep{iad, akd, adaad} and gradients \citep{IGDM} to distill robustness into the student model.

While these adversarial training and distillation studies have achieved strong robustness, they have only been conducted on balanced datasets where each class has an equal number of samples.
This differs significantly from the real-world data configurations we encounter, highlighting the necessity of adversarial training or distillation for unbalanced datasets.

\subsection{Long-Tailed Recognition}

Extensive research has been conducted to address the performance imbalance inherent in long-tailed distribution datasets.
Prominent methods include oversampling the minority tail data \citep{chawla2002smote, han2005borderline} and increasing the weight of the minority classes \citep{cui2019class, zhang2021distribution}. Although these methods are intuitive, they pose a risk of overfitting on the tail classes and can degrade feature extraction performance \citep{kang2019decoupling, zhou2020bbn}.
A more effective approach, decoupled learning \citep{kang2019decoupling, zhou2020bbn, alshammari2022long}, separates feature learning from classification to mitigate such issues.
Moreover, logit compensation methods have been proposed recently, introducing relatively larger margins between different classes based on prior class frequencies \citep{cao2019learning, kang2019decoupling, menon2020long, ren2020balanced, tan2020equalization}.

\subsection{Long-Tailed Adversarial Training}

RoBal \citep{Wu_2021_CVPR} is the first paper to address adversarial robustness in long-tailed classification. RoBal applies adversarial training with TRADES regularization \citep{TRADES} alongside long-tailed techniques such as balanced softmax \citep{ren2020balanced} and class-aware margin \citep{class_margin_lt}.
Moreover, it provides detailed insights into which modules are effective for long-tailed adversarial training.
REAT \citep{li2023alleviating} aimed to achieve balanced performance by utilizing class-wise weights to generate adversarial examples and expanding the feature space of tail class data.
AT-BSL \citep{Yue_2024_CVPR} revisited the RoBal paper to analyze the necessity of various modules and concluded that only the balanced softmax loss (BSL) is sufficient without the need for complex modules as follows:
\begin{equation}
\label{eq:bsl}
    \mathcal{L}_{BSL}(f(\boldsymbol{x}'),y) = -\log\Big(\frac{e^{z'_{y} + b_y}}{\sum_{i}{e^{z'_{i}+ b_i}}}\Big),
\end{equation}
where $\boldsymbol{x}'$ is an adversarially perturbed input of $\boldsymbol{x}$, $z'_i = f(\boldsymbol{x}')_i$ is $i$-th logits of the adversarial input, $b_i = \tau \ log(n_i)$, $\tau$ is a hyperparameter and $n_i$ is the number of examples in the $i$-th class.
The balanced softmax is a commonly used loss in addressing long-tail problems to boost the performance of tail classes \citep{ren2020balanced}.
However, its drawback lies in adjusting the importance of tail classes based on the number of data.
In other words, more than balanced softmax is needed to address robustness concerns for tail classes adequately.

There has been no in-depth analysis of robustness degradation in tail classes compared to head classes in adversarially robust long-tailed distributions.
In this paper, we aim to improve the overall performance of existing long-tailed adversarial training by achieving sufficient robustness of tail classes.

\section{Analysis}

\subsection{Preliminary}
Let $f$ represent the classification model, which maps the input data space $\mathcal{X}$ to the output label space $\mathcal{Y}$, i.e., $f: \mathcal{X} \rightarrow \mathcal{Y}$.
For specific instance of $\mathcal{X}$ and $\mathcal{Y}$, we use $\boldsymbol{x} \in \mathcal{X}$ and $y \in \mathcal{Y}$ and $\boldsymbol{x} = (x_1,x_2, \dots, x_n)$ where $n$ is the dimension of $\boldsymbol{x}$. 
\begin{definition}
 For a classifier $f(\cdot)$, the overall standard error $\mathcal{R}_\text{std}(\cdot)$ of classifier $f(\cdot)$  is defined as
\[
\mathcal{R}_\text{std}(f) = \Pr(f(\boldsymbol{x}) \neq y),
\]
and its robust error $\mathcal{R}_\text{rob}(\cdot)$ is
\[
\mathcal{R}_\text{rob}(f) = \Pr(\exists \boldsymbol{\delta} \ \text{ with } \ \|\boldsymbol{\delta}\|_\infty \leq \epsilon \text{ s.t. } f(\boldsymbol{x} + \boldsymbol{\delta}) \neq y)
\]
where $\Pr(\cdot)$ means probability and $\epsilon$ is a non-negative perturbation boundary.
\end{definition}
For simplicity, we denote by $f_{nat}(\cdot)$ the natural classifier that minimizes standard error, and by $f_{rob}(\cdot)$ the robust classifier that minimizes robust error. Additionally, we denote the standard error and robust error for a given class $k$ as $\mathcal{R}^k_\text{std}(f)$ and $\mathcal{R}^k_\text{rob}(f)$, respectively.

\begin{figure}[b]
  \begin{center}
    \includegraphics[width=1\textwidth]{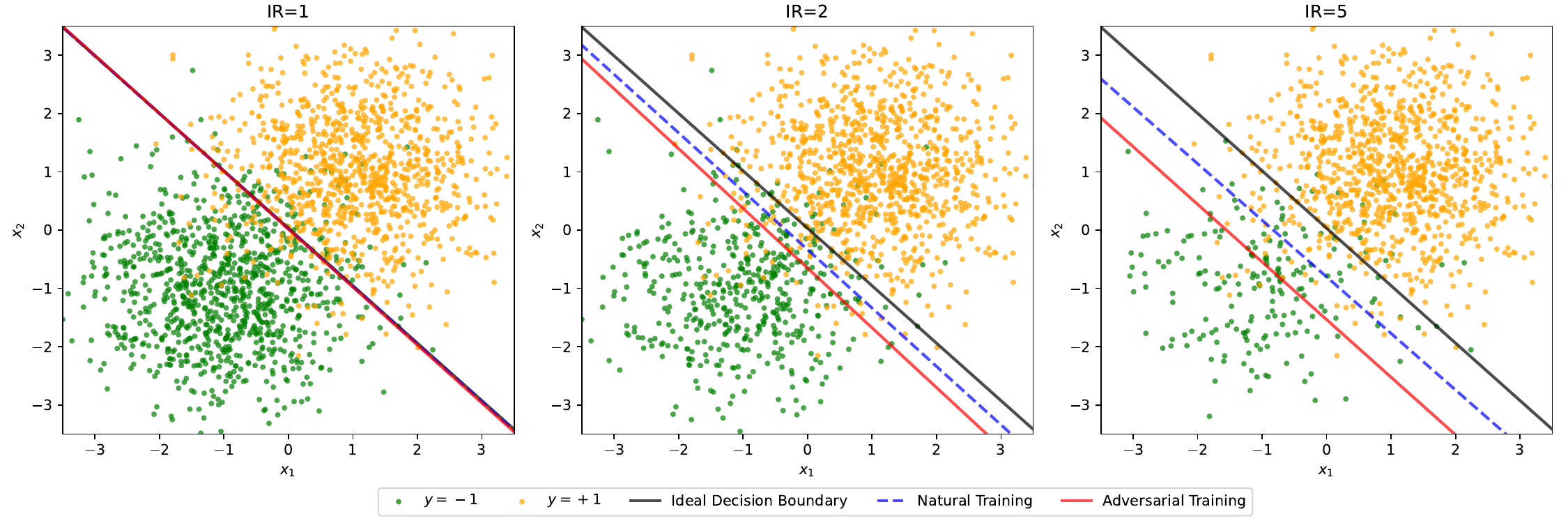}
  \end{center}
  \caption{Logistic regression on binary data in \Cref{eq:binary_data} with different imbalance ratio (IR).}
  \label{fig:logistic_fig}
  \vspace{-0.5cm}
\end{figure}

\subsection{Theoretical Analysis}

Let the long-tailed dataset for a binary classification task data $\mathcal{S}$ with imbalance ratio $r\geq1$, \ie, the ratio of the number of instances in the head class ($y = +1$) to the number of instances in the tail class ($y = -1$) is $r$.
We assume Gaussian mixture distribution, which is similar to \cite{FairAT, DAFA} as follows.
\begin{equation}
\label{eq:binary_data}
    y  = \left\{ \begin{array}{cl}
+1, & \text{w.p} \ \frac{r}{r+1} \\
-1, & \text{w.p} \ \frac{1}{r+1}
\end{array} \right. , \ \ 
x_1, \cdots, x_{n} \overset{i.i.d}{\sim} \mathcal{N}(\eta y, 1),
\end{equation}
where $\eta > \epsilon $ is a constant that determines the standard deviation of the Gaussian distribution.
We address a binary classification problem on the above dataset, and then we obtain the following linear function $ f_{\boldsymbol{w}, b}(\cdot)$, with weight $\boldsymbol{w}$ with bias $b$.
\begin{equation}
    f_{\boldsymbol{w}, b}(\boldsymbol{x}) = \text{sign}\left(\sum_{k=1}^{n} w_k x_k + b\right).
    \end{equation}
According to \Cref{lemma:identical_weight_natural} and \Cref{lemma:identical_weight_robust}  in \Cref{sec:proof},  each weight $w_1,w_2, \cdots , w_n$ of optimal (natural, robust)  classifier has the same weight, \ie, $w_1 = w_2 = \cdots = w_n$.
We derive the standard and robust error for the tail class of each optimal classifier as follows.
\begin{theorem} 
\label{theorem:1}
    For a data distribution $\mathcal{S}$, the optimal natural classifier $f^*_{nat}$ and robust classifier $f^*_{rob}$ exhibit the following standard and robust errors for the tail class $-1$ with perturbation margin $0 < \epsilon < \eta$, respectively:
    \begin{align}
        \mathcal{R}^{-1}_{nat}(f^*_{nat}) &= \Phi\left(-\sqrt{n}\eta + \frac{\ln r}{2\sqrt{n}\eta}\right), \ \ \ \ \ \ \ \ \ \  \mathcal{R}^{-1}_{rob}(f^*_{nat}) = \Phi\left(-\sqrt{n}(\eta-\epsilon) + \frac{\ln r}{2\sqrt{n}\eta}\right),  \label{eq:thm1_1} \\
        \mathcal{R}^{-1}_{nat}(f^*_{rob}) &= \Phi\left(-\sqrt{n}\eta + \frac{\ln r}{2\sqrt{n}(\eta-\epsilon)}\right),\label{eq:thm1_2} \ 
        \mathcal{R}^{-1}_{rob}(f^*_{rob}) = \Phi\left(-\sqrt{n}(\eta-\epsilon) + \frac{\ln r}{2\sqrt{n}(\eta-\epsilon)}\right).
    \end{align}
\end{theorem}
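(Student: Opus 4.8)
The plan is to use the two weight-symmetry lemmas to collapse the $n$-dimensional problem to a scalar one, reduce both the natural and the robust objective to a single bias optimization, and then substitute the optimal biases into closed-form Gaussian tail expressions. First I would invoke \Cref{lemma:identical_weight_natural} and \Cref{lemma:identical_weight_robust} to write $w_1 = \cdots = w_n =: w$. Since the prediction depends on the data only through $\sign\!\left(w\sum_k x_k + b\right)$ and the optimal common weight is positive, I normalize $w = 1$ (absorbing the scale into $b$), so the classifier depends on the data solely through the scalar statistic $S := \sum_{k=1}^{n} x_k$, which under class $y$ is distributed as $\mathcal{N}(ny\eta,\, n)$. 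For the tail class ($y=-1$) we have $S \sim \mathcal{N}(-n\eta, n)$, and a clean misclassification occurs exactly when $S + b > 0$, so the tail standard error of any such classifier is $\Phi\!\left(\frac{b-n\eta}{\sqrt{n}}\right)$; the head-class error follows from the identical computation with $S \sim \mathcal{N}(n\eta, n)$.

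Next I would handle the adversary. Because every coordinate weight equals $1$, the worst-case $\ell_\infty$ perturbation of radius $\epsilon$ against a tail point is the uniform shift $\delta_k = +\epsilon$ (and $\delta_k = -\epsilon$ against a head point), so the inner maximization collapses to translating $S$ by $\pm n\epsilon$. This yields the tail robust error $\Phi\!\left(\frac{b-n(\eta-\epsilon)}{\sqrt{n}}\right)$, i.e. exactly the standard expression with $\eta$ replaced by $\eta-\epsilon$ in the margin term. I would then assemble the prior-weighted objectives $g_{\text{nat}}(b) = \frac{r}{r+1}\Phi\!\left(\frac{-b-n\eta}{\sqrt{n}}\right) + \frac{1}{r+1}\Phi\!\left(\frac{b-n\eta}{\sqrt{n}}\right)$ for the natural risk, and its analogue $g_{\text{rob}}$ obtained by the substitution $\eta \mapsto \eta-\epsilon$ for the robust risk.

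Setting $g_{\text{nat}}'(b)=0$ gives the density-balance condition $r\,\phi\!\left(\frac{b+n\eta}{\sqrt{n}}\right) = \phi\!\left(\frac{b-n\eta}{\sqrt{n}}\right)$, where $\phi$ is the standard normal density; taking logarithms turns the Gaussian densities into a linear equation in $b$ whose solution is $b_{\text{nat}} = \frac{\ln r}{2\eta}$, and the identical argument applied to $g_{\text{rob}}$ yields $b_{\text{rob}} = \frac{\ln r}{2(\eta-\epsilon)}$. Substituting $b_{\text{nat}}$ into the tail standard and robust expressions produces the two identities in \Cref{eq:thm1_1}, and substituting $b_{\text{rob}}$ produces those in \Cref{eq:thm1_2}; the two cross terms come from pairing each optimal bias with the opposite (standard versus robust) margin.

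I expect the main obstacle to be the two structural reductions rather than the algebra: justifying that the optimal common weight is positive, and that the worst-case perturbation is exactly the coordinate-uniform $\pm\epsilon$ shift, so that the $n$-dimensional inner maximization genuinely reduces to a deterministic translation of the one-dimensional statistic $S$. I would also need to confirm that the stationary point of $g_{\text{nat}}$ (resp.\ $g_{\text{rob}}$) is the global minimizer, which I would settle by showing its derivative changes sign exactly once, guaranteeing a unique interior minimum. Once these facts are in place, the remainder is the routine log-linearization of a ratio of Gaussian densities.
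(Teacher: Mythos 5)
Your proposal follows essentially the same route as the paper's proof: invoke the weight-symmetry lemmas, reduce to a one-dimensional bias optimization of the prior-weighted Gaussian tail objective, solve the first-order condition by log-linearizing the density ratio to get $b_{\text{nat}}$ and $b_{\text{rob}}$, and substitute each bias into the standard and robust margin expressions. The only differences are cosmetic (you normalize $w=1$ while the paper fixes $\|\boldsymbol{w}\|_2=1$, so your biases carry an extra $\sqrt{n}$ factor that cancels in the final expressions) and that you explicitly flag the points the paper treats implicitly, namely the positivity of the common weight, the coordinate-uniform worst-case perturbation, and global minimality of the stationary point.
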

A detailed proof of \Cref{theorem:1} can be found in \Cref{sec:proof}.
From \Cref{theorem:1}, we can easily infer that both the natural and robust errors of the tail class for both the natural and robust classifiers increase monotonically with respect to the imbalance ratio $r$.
Building upon this, we present the following corollary:

\begin{corollary}
\label{corollary:1}
Adversarial training on long-tailed datasets exacerbates the vulnerability of the tail class, making them even less robust than under natural training : 
\begin{equation}
    \mathcal{R}^{-1}_{rob}(f^*_{rob}) > \mathcal{R}^{-1}_{rob}(f^*_{nat}).
\end{equation}
\end{corollary}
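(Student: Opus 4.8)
The plan is to reduce the inequality to a comparison of the two arguments fed into $\Phi$, exploiting that $\Phi$ is strictly increasing. First I would write down the two relevant expressions directly from \Cref{theorem:1}: the robust error of the tail class under the natural classifier,
\[
\mathcal{R}^{-1}_{rob}(f^*_{nat}) = \Phi\left(-\sqrt{n}(\eta-\epsilon) + \frac{\ln r}{2\sqrt{n}\eta}\right),
\]
and the robust error under the robust classifier,
\[
\mathcal{R}^{-1}_{rob}(f^*_{rob}) = \Phi\left(-\sqrt{n}(\eta-\epsilon) + \frac{\ln r}{2\sqrt{n}(\eta-\epsilon)}\right).
\]

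Next, since $\Phi$ is the standard-normal CDF and hence strictly monotone increasing, the claimed inequality $\mathcal{R}^{-1}_{rob}(f^*_{rob}) > \mathcal{R}^{-1}_{rob}(f^*_{nat})$ is equivalent to the corresponding inequality between the arguments. I would observe that the leading term $-\sqrt{n}(\eta-\epsilon)$ is identical in both expressions, so it cancels, and the comparison collapses to
\[
\frac{\ln r}{2\sqrt{n}(\eta-\epsilon)} > \frac{\ln r}{2\sqrt{n}\eta}.
\]

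Finally, I would verify this last inequality from the standing hypotheses. Genuine imbalance means $r > 1$, so $\ln r > 0$; the perturbation constraint $0 < \epsilon < \eta$ gives $0 < \eta - \epsilon < \eta$, hence $\tfrac{1}{\eta-\epsilon} > \tfrac{1}{\eta}$. Multiplying through by the positive factor $\tfrac{\ln r}{2\sqrt{n}}$ yields the required strict inequality, and monotonicity of $\Phi$ then delivers the corollary.

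As for the main obstacle: frankly there is very little difficulty once \Cref{theorem:1} is in hand, since the entire content is the cancellation of the shared term together with the monotonicity of reciprocals and of $\Phi$. The only point requiring care is the role of $r$: the inequality is strict precisely when $r > 1$, because at $r = 1$ both sides equal $\Phi\left(-\sqrt{n}(\eta-\epsilon)\right)$. I would therefore flag that the corollary implicitly assumes the genuinely long-tailed regime $r > 1$, which is exactly the setting of interest.
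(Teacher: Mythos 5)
Your proof is correct and is exactly the argument the paper intends: it cites \Cref{theorem:1} and calls the corollary trivial, relying precisely on the cancellation of the common term $-\sqrt{n}(\eta-\epsilon)$, the inequality $\tfrac{1}{\eta-\epsilon}>\tfrac{1}{\eta}$ from $0<\epsilon<\eta$, and the strict monotonicity of $\Phi$. Your added observation that strictness requires $r>1$ (the paper only states $r\geq 1$, and at $r=1$ the two errors coincide) is a valid and worthwhile refinement of the paper's unstated hypothesis.
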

A proof of \Cref{corollary:1} is trivial according to \cref{eq:thm1_1} and \cref{eq:thm1_2} in \Cref{theorem:1}.


\subsection{Empirical Analysis}

\begin{wrapfigure}{r}{0.4\textwidth}
  \begin{center}
    \vspace{-0.7cm}
    \includegraphics[width=0.4\textwidth]{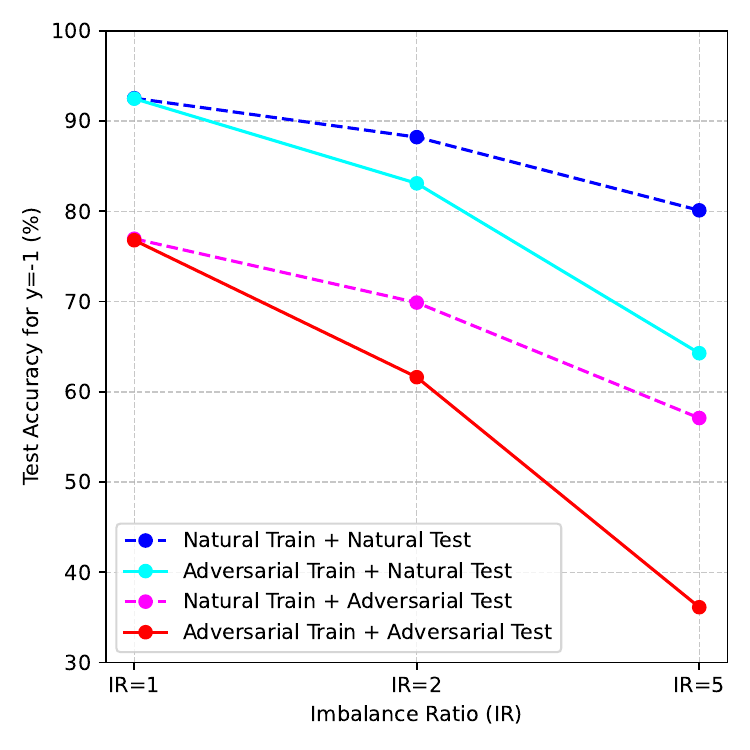}
  \end{center}
  \vspace{-0.5cm}
  \caption{Tail class natural and robust accuracy with respect to natural and adversarial training with different imbalance ratios (IR) in \Cref{fig:logistic_fig}.}
  \label{fig:logistic_acc}
  \vspace{-0.3cm}
\end{wrapfigure}

In \Cref{fig:logistic_fig}, we present a visualization of the theoretical analysis in a 2-dimensional space.
The data were sampled from Gaussian distributions with $\eta = 1$ and $n = 2$ following \Cref{eq:binary_data}, considering three different imbalance ratios (IR=1, 2, 5).
The figure highlights the decision boundaries formed by both natural and adversarial training ($\epsilon = 0.5$). 
As the imbalance ratio increases, the decision boundary of the adversarially trained model becomes more distorted, reflecting the model's increased sensitivity to adversarial perturbations in the minority class.
\begin{wrapfigure}{r}{0.5\textwidth}
    \vspace{-0.5cm}
    \centering
    \begin{minipage}[c]{.25\textwidth}
      \centering
      \includegraphics[scale=0.23]{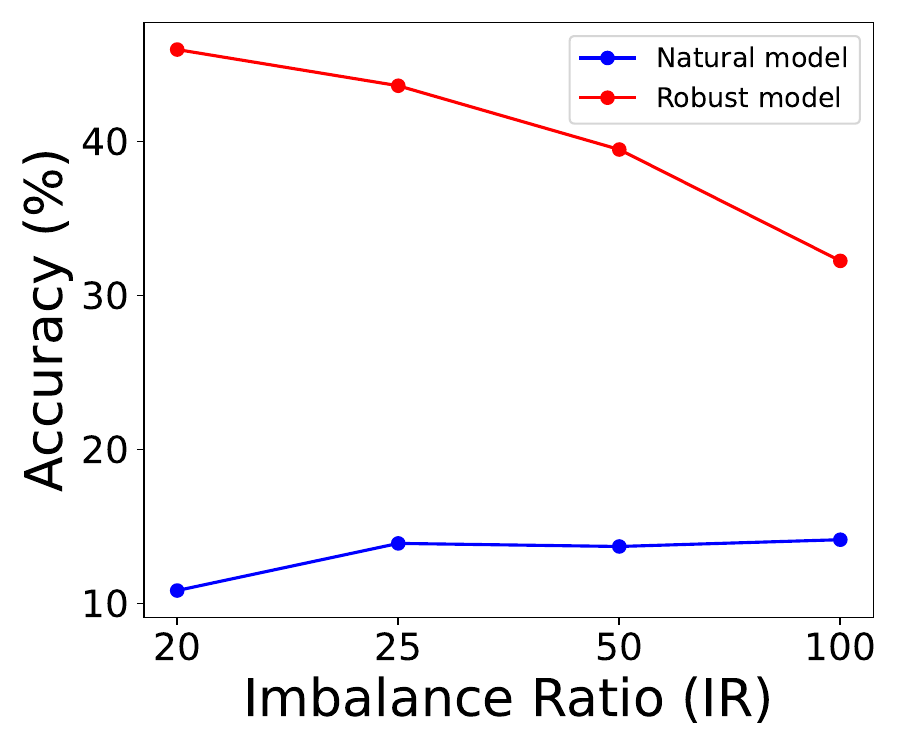}
    \end{minipage}%
    \begin{minipage}[c]{.25\textwidth}
      \centering
      \includegraphics[scale=0.23]{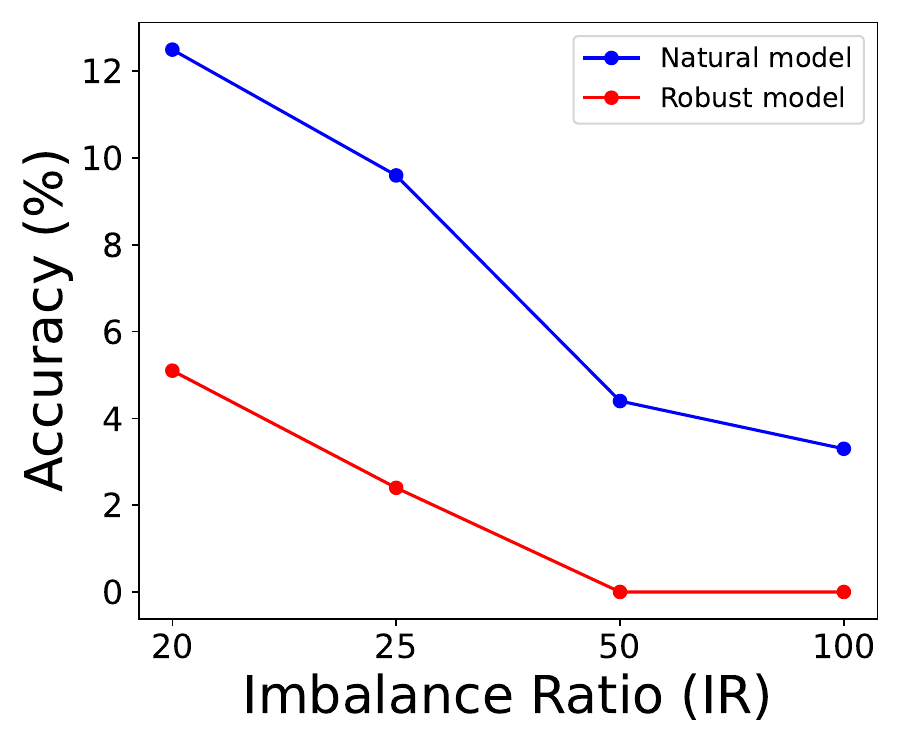}    
    \end{minipage}
      \\
    \begin{minipage}[t]{.25\textwidth}
      \subcaption{Whole classes.}   
    \label{fig:natural_vs_robust_a}
    \end{minipage}%
    \begin{minipage}[t]{.25\textwidth}
        \subcaption{Tail classes.}
        \label{fig:natural_vs_robust_b}
    \end{minipage}
    \caption{Clean and robust accuracy of natural and robust models.}%
    \label{fig:natural_vs_robust}%
    \vspace{-0.6cm}
\end{wrapfigure}



\Cref{fig:logistic_acc} shows the test accuracy of the tail class ($y=-1$) of the logistic regressions. The results indicate a clear trend: as the imbalance ratio grows, the test accuracy for the tail class drops across both natural and adversarial training scenarios. 
More notably, adversarial training consistently produces more robust errors than natural training.
These results align with our theoretical predictions: imbalanced data amplifies clean and robust errors for the tail class, and adversarial training further exacerbates robust errors.

We further experiment on a long-tailed CIFAR-10 dataset on various imbalance ratios.
We conduct both standard and PGD-adversarial training and compare the robust accuracy against PGD attack with $\epsilon = 2/255$ of the entire and tail classes.
As shown in \Cref{fig:natural_vs_robust_a}, the robust performance of the adversarially-trained model across various imbalance ratios is superior to that of the naturally-trained model, which shows trivial results.
However, in \Cref{fig:natural_vs_robust_b}, the natural model exhibits better robust performance than the adversarially-trained model in the tail classes.
This experiment supports the results of \Cref{corollary:1}, which clearly demonstrates that while adversarial training generally helps to improve robustness, it could exacerbate performance degradation in the tail classes of long-tailed distributions.

\begin{algorithm*}[t]
\caption{Main Algorithm}
\label{alg:algorithm}
\vspace{-0.02in}
\hspace*{0.1in}\textbf{Input:}  Long-tailed dataset $\mathcal{D}$, batch size $N$, epochs $T$, learning rate $\mu$, hyperparameters $\alpha$, $\gamma$,\\\hspace*{0.2in}and balanced self-teacher training parameters (batch size $N_B$, epochs $T_B$, and learning rate $\mu_B$)\\
\hspace*{0.1in}\textbf{Output:} Robust model $f$ on long-tailed dataset $\mathcal{D}$\vspace{0.1in}\\
\hspace*{0.1in}\textcolor{gray}{\# balanced self-teacher training} \\
\hspace*{0.1in}{Make balanced dataset $\mathcal{D}_{B}$ by re-sampling $\mathcal{D}$ with hyperparameter $\gamma$}\\
\hspace*{0.1in}Randomly initialize $\theta_B$, the weights of balanced self-teacher $f_{B}$ \\
\hspace*{0.1in}\textbf{for}{ epoch$\ = 1$ to $T_B$} \textbf{do} \\
\hspace*{0.2in}\textbf{for}{ mini batch $\{\boldsymbol{x}_i, y_i\}^{N_B}_{i=1}$ in $\mathcal{D}_{B}$} \textbf{do} \\ 
\hspace*{0.3in}\textbf{for}{ $i = 1$ to $N_B$} \textbf{do}\\
\hspace*{0.4in}\text{$\boldsymbol{x}'_{i}$ = PGD($\boldsymbol{x}_i$, $y_i$})\hspace*{2in}     \textcolor{gray}{\# PGD attack}\\ 
\hspace*{0.3in}\textbf{end for}\\
\hspace*{0.3in}$\theta_B \gets - \mu_B \frac{1}{N_B} \sum_{i=1}^{N_B}  \nabla_{\theta_B} \mathcal{L}_{CE}(f_B(\boldsymbol{x}'_i), y_i)$  \\
\hspace*{0.2in}\textbf{end for} \\
\hspace*{0.1in}\textbf{end for} \\

\hspace*{0.1in}\textcolor{gray}{\# main training} \\
\hspace*{0.1in}Randomly initialize $\theta$, the weights of training model $f$ \\
\hspace*{0.1in}\textbf{for}{ epoch$\ = 1$ to $T$} \textbf{do} \\
\hspace*{0.2in}\textbf{for}{ mini batch $\{\boldsymbol{x}_i, y_i\}^{N}_{i=1}$ in $\mathcal{D}$} \textbf{do} \\ 
\hspace*{0.3in}\textbf{for}{ $i = 1$ to $N$} \textbf{do}\\
\hspace*{0.4in}\text{$\boldsymbol{x}'_{i}$ = PGD($\boldsymbol{x}_i$, $y_i$})\hspace*{2in}     \textcolor{gray}{\# PGD attack}\\ 
\hspace*{0.3in}\textbf{end for}\\
\hspace*{0.3in}$\theta \gets -\mu \frac{1}{n} \sum_{i=1}^{n}  \nabla_{\theta} \Big[ \mathcal{L}_{BSL}(f(\boldsymbol{x}'_i), y_i)$ \hspace*{0.9in}  \textcolor{gray}{\# Balanced softmax loss} \\
\hspace*{0.3in}\hspace*{1.1in}$ +  \alpha \cdot \mathcal{L}_{KD}(f(\boldsymbol{x}'_i), f_B(\boldsymbol{x}_i))\Big]$ \hspace*{0.43in} \textcolor{gray}{\# Self-distillation}\\
\hspace*{0.2in}\textbf{end for} \\
\hspace*{0.1in}\textbf{end for}

\end{algorithm*}

\section{Method}
We examine the impact of unbalanced datasets on performance disparity, particularly noting that this disparity becomes more pronounced during robust training compared to natural training.
To address this issue, we propose a simple yet effective self-distillation framework.

\subsection{Limitations of Existing Balanced Softmax Approaches}
Balanced softmax \citep{Wu_2021_CVPR, Yue_2024_CVPR} is a powerful method that effectively addresses the issue of tail-class robustness under adversarial training. 
These works demonstrate that applying Balanced Softmax improves tail-class robustness.
However, as shown in \Cref{tab:balance_versus_full},  while Balanced Softmax prevents the robustness of tail classes, it still falls short compared to naive PGD training on a balanced dataset in terms of tail-class robustness.
Moreover, our experiments in \Cref{tab:main_cifar10_res18}, \Cref{tab:main_cifar100_res18}, and \Cref{tab:main_tinyimg_res18} demonstrate that tail-class robustness under adversarial attacks remains notably lower than that of head classes.
This observation underscores the necessity of additional strategies to explicitly improve tail-class robustness.

\subsection{Training Self-Teacher to Guide Tail Class Robustness}

To address the limitations of Balanced Softmax and improve tail-class robustness, we construct a balanced sub-dataset \(D_B\) by up-sampling tail classes and down-sampling head classes. Specifically, suppose the number of samples of each class in $D$ is $n_1 < n_2 < \dots < n_C$, then we construct a new dataset where each class contains $\gamma \cdot n_1$ where $\gamma > 1$ is a hyperparameter of adjusting the number of $D_B$.
Using \(D_B\), we perform robust training with PGD, resulting in a self-teacher model that is more robust to tail classes compared to models trained on imbalanced datasets.
The balanced self-teacher transfers its tail robustness to the student model via adversarial knowledge distillation \cite{IGDM}, as detailed in \Cref{alg:algorithm}. Through this process, the proposed method addresses the insufficient tail-class robustness of Balanced Softmax, enhancing the model's robustness on tail classes while maintaining overall robustness.

\section{Experiments}

\subsection{Experiment Settings}

\noindent\textbf{Dataset.}
We conducted experiments using long-tailed distribution datasets: CIFAR-10-LT, CIFAR-100-LT \citep{krizhevsky2009learning}, and Tiny-ImageNet-LT \citep{le2015tiny}, with various imbalance ratios (IR), primarily set at 50 for CIFAR-10-LT, 10 for CIFAR-100-LT and Tiny-ImageNet-LT.
Random crop and random horizontal flip were applied, while other augmentations were not utilized unless specified.

\noindent\textbf{Training details.}
We employed ResNet-18 \citep{he2016deep} and WideResNet-34-10 \citep{zagoruyko2016wide} architectures for CIFAR-10/100-LT, and results for WideResNet-34-10 are included in the appendix.
For Tiny-ImageNet-LT, we employed PreActResNet-18 \citep{he2016identity}.
Initially, we trained a balanced self-teacher using the same model architecture for $30$ epochs using a batch size of $32$ with a balanced dataset, resampled by the original long-tailed dataset with $\gamma = IR/2$. 
In the main training phase, we trained for $100$ epoch using a batch size of $128$ with self-distillation from the balanced self-teacher.
We utilized SGD optimization to train both the balanced self-teacher and the main model, setting the learning rate to $0.1$ and weight decay to $5\times10^{-4}$.
We used an epsilon boundary of $8\slash255$, a commonly used setting in adversarial training, and employed a 10-step PGD attack during training.

\begin{table}[t]
\begin{center}
\caption{The clean accuracy and robustness for various algorithms using ResNet-18 on CIFAR-10-LT. T-Clean and T-PGD are clean and PGD-20 accuracy on tail class.}
\label{tab:main_cifar10_res18}
\setlength{\tabcolsep}{4.5pt}
 \renewcommand{\arraystretch}{1}
\begin{tabular}{lrrrrrrrrrr}
\toprule
 \multicolumn{1}{c}{\multirow{2}{*}{Method}} & \multicolumn{5}{c}{Best Checkpoint}  & \multicolumn{5}{c}{Last Checkpoint} \\
\cmidrule(r){2-6}
\cmidrule(r){7-11}
 & Clean  & PGD  & AA  & T-Clean & T-PGD&  Clean & PGD  & AA & T-Clean & T-PGD \\
\midrule
PGD-AT  & 52.71 & 29.30 & 27.57 & 12.7 & 1.0 & 56.39 & 26.98 & 25.81 & 20.8 & 2.2 \\
TRADES  & 45.79 & 28.66 & 27.01 & 6.7 & 0.8 & 47.10 & 28.00 & 26.45 & 6.4 & 0.6 \\
MART  & 44.03 & 29.36 & 27.59 & 5.0 & 0.5 & 47.33 & 28.08 & 26.55 & 10.9 & 1.0 \\
AWP  & 51.69 & 32.42 & 30.35 & 5.3 & 0.2 & 51.89 & 32.42& 30.35 & 10.9 & 0.6  \\
\midrule
RoBal   & 70.54 & 35.33 & 28.83 & 70.4 & 33.1 & \textbf{72.80} & 28.04 & 25.00 & 67.7 & 15.9   \\
REAT  & 68.34 & 35.98 & 32.45 & 69.5 & 29.1 & 68.32 & 28.67 & 26.68 & 55.7 & 11.6   \\
AT-BSL  &  68.43 & 35.87 & 32.27 & 63.2 & 22.0 & 67.60 & 29.40 & 27.46 & 50.1 & 8.7  \\
\textbf{Ours}  & \textbf{70.81} & \textbf{38.85} & \textbf{34.32} & \textbf{73.8} & \textbf{36.9} & 71.74 & \textbf{37.80} & \textbf{33.74} & \textbf{74.7} & \textbf{36.2}
\\
\bottomrule
\end{tabular}
\end{center}
\end{table}

\begin{table}[!t]
\begin{center}
\caption{The clean accuracy and robustness for various algorithms using ResNet-18 on CIFAR-100-LT. T-Clean and T-PGD are clean and PGD-20 accuracy on the tail class group.}
\label{tab:main_cifar100_res18}
\setlength{\tabcolsep}{4.5pt}
 \renewcommand{\arraystretch}{1}
\begin{tabular}{lrrrrrrrrrr}
\toprule
 \multicolumn{1}{c}{\multirow{2}{*}{Method}} & \multicolumn{5}{c}{Best Checkpoint}  & \multicolumn{5}{c}{Last Checkpoint} \\
\cmidrule(r){2-6}
\cmidrule(r){7-11}
 & Clean  & PGD  & AA  & T-Clean & T-PGD&  Clean & PGD  & AA & T-Clean & T-PGD \\
\midrule
PGD-AT  & 42.73 & 17.31 & 15.44 & 20.4 & 7.7 & 43.09 & 15.07 & 14.05 & 22.4 & 6.7 \\
TRADES  & 38.83 & 19.05 & 16.06 & 16.5 & 7.3 & 39.63 & 18.86 & 16.18 & 16.1 & 6.8\\
MART  & 38.57 & 19.90 & 17.10 & 16.6 & 7.7 & 40.31 & 17.07 & 15.21 & 19.5 & 7.2 \\
AWP  & 40.46 & 21.85 & 18.58 & 16.2 & 8.5 & 40.15 & 21.71 & 18.33 & 16.2 & 8.6  \\
\midrule
RoBal   & 44.27 & 19.67 & 16.78 & 18.4 & 8.0 & 46.46 & 16.28 & 14.73 & 23.3 & 6.7   \\
REAT   & 45.73 & 18.22 & 15.82 & 32.2  & 11.4  & 45.53 & 15.64 & 14.27 & 33.0 & 10.5   \\
AT-BSL & 45.38 & 18.04 & 15.73 & 33.1 & 12.4 & 45.48 & 15.36 & 14.07 & 31.5 & 9.1 \\
\textbf{Ours}  & \textbf{46.13} & \textbf{22.42} & \textbf{18.73} & \textbf{38.9} & \textbf{17.9} & \textbf{47.22} & \textbf{21.82} & \textbf{18.53} & \textbf{37.9} & \textbf{17.6} 
\\
\bottomrule
\end{tabular}
\end{center}
\end{table}

\begin{table}[!ht]
\begin{center}
\caption{The clean accuracy and robustness for various algorithms using PreActResNet-18 on Tiny-ImageNet-LT.}
\label{tab:main_tinyimg_res18}
\setlength{\tabcolsep}{4.5pt}
 \renewcommand{\arraystretch}{1}
\begin{tabular}{lrrrrrrrrrr}
\toprule
 \multicolumn{1}{c}{\multirow{2}{*}{Method}} & \multicolumn{5}{c}{Best Checkpoint}  & \multicolumn{5}{c}{Last Checkpoint} \\
\cmidrule(r){2-6}
\cmidrule(r){7-11}
 & Clean  & PGD  & AA  & T-Clean & T-PGD&  Clean & PGD  & AA & T-Clean & T-PGD \\
\midrule
PGD-AT  & 34.89 & 14.17 & 10.98 & 15.8 & 5.4 & 36.55 & 9.69 & 8.45 & 23.2 &5.0 \\
TRADES  & 33.76 & 13.71 & 10.00 & 15.4 & 6.4 & 32.97& 12.50 & 9.61 & 16.0 & 6.0\\
MART  & 31.15 & 15.45 & 11.94 & 14.4 & 6.0 & 32.91 & 12.42 & 10.32 & 17.8 & 7.6\\
AWP  & 32.28& 15.09& 11.27 & 14.2& 5.6& 32.13& 13.95& 11.10& 14.4&  6.6\\
\midrule
RoBal  & 35.25& 14.01& 10.44& 15.0& 4.4& 37.97& 10.51& 8.64& 21.8& 4.8\\
REAT  & 38.37 & 15.25 & 11.99 & 33.0& 12.6& 38.48 & 10.58 & 9.07 & 31.6& 8.8\\
AT-BSL  & 38.38 & 15.39 & 11.85 & 30.8 & 13.0 & 38.41 & 10.25 & 8.90 & 32.4 & 7.0\\
\textbf{Ours} & \textbf{38.44}& \textbf{17.02}& \textbf{12.57}& \textbf{36.6}& \textbf{16.0}& \textbf{49.37}& \textbf{14.09}& \textbf{11.15}& \textbf{33.8}&  \textbf{12.6}\\
\bottomrule
\end{tabular}
\end{center}
\vspace{-0.15 in}
\end{table}

\noindent\textbf{Comparison models.}
As comparison models, we utilized PGD-AT \citep{PGD}, TRADES \citep{TRADES}, MART \citep{MART}, and AWP \citep{2020_awp}, representing prominent approaches of AT.
Additionally, we followed RoBal \citep{Wu_2021_CVPR}, REAT \citep{li2023alleviating}, and AT-BSL \citep{Yue_2024_CVPR}, which focus on long-tailed adversarial training.
For long-tailed AT implementation, we meticulously followed the setting of existing methods such as learning rate, batch size, weight decay, etc.

\noindent\textbf{Evaluation.} 
Evaluation metrics included clean accuracy, accuracy under a 20-step PGD attack, and AutoAttack (AA) accuracy \citep{AutoAttack}.
Additionally, we assessed clean and 20-step PGD attack accuracy specifically for tail classes, denoted as T-Clean and T-PGD, respectively.
In CIFAR-10-LT, the performance evaluation focused on the last class, while CIFAR-100-LT and Tiny-ImageNet-LT evaluated the performance of the tail 10 and 20 classes out of 100 and 200 classes, respectively.
We measured performance at both the best and last epoch based on the accuracy under the 20-step PGD attack.

\subsection{Main Results}

We demonstrated excellent performance across all datasets, including CIFAR-10-LT, CIFAR-100-LT, and Tiny-ImageNet-LT in \Cref{tab:main_cifar10_res18}, \Cref{tab:main_cifar100_res18}, and \Cref{tab:main_tinyimg_res18}, respectively.
The experimental results on the WideResNet-34-10 architecture can be found in  \Cref{tab:main_cifar10_wrn}and \Cref{tab:main_cifar100_wrn} in the appendix.
Particularly noteworthy is the substantial improvement in performance for tail classes.
When adversarial training methods such as PGD-AT, TRADES, MART, and AWP are naively applied to long-tailed datasets, overall performance remains reasonable compared to Robal, REAT, AT-BSL, but performance for the tail classes notably suffers.
For instance, while AWP exhibits superior performance compared to RoBal, the clean accuracy and robust accuracy for tail classes are significantly low.
Long-tailed adversarial training methods such as RoBal, REAT, and AT-BSL show considerable improvement in tail class performance compared to other adversarial training methods. 
However, when compared to the performance of the entire class, it is still evident that the performance remains imbalanced.
In contrast, our method shows significant improvement in the performance of the tail classes, resulting in minimal difference compared to the performance of the entire classes.
Additionally, we achieved overall better performance than the baseline at both the best and last checkpoints.

\subsection{Ablation}
In this section, we conduct further experiments to corroborate our main contribution.

\subsubsection{Augmentation}
Following the inclusion of various augmentation experiments outlined in AT-BSL, we conducted experiments applying RandAugment (RA) \citep{cubuk2020randaugment} and AutoAugment (AuA) \citep{cubuk2019autoaugment} in \Cref{tab:ablation_Aug}. 
While applying augmentation led to overall performance improvements, the best results were achieved when augmentation was applied to our method.
Our method consistently outperformed baselines on robustness with augmentation setting including tail class performance with augmentation.

\begin{table*}[t]
\begin{center}
\caption{The clean accuracy and robustness with augmentation using ResNet-18 on CIFAR-100-LT. T-Clean and T-PGD are clean and 20-step PGD accuracy on the tail class group.}
\label{tab:ablation_Aug}
\setlength{\tabcolsep}{3.5pt}
 \renewcommand{\arraystretch}{1}
\begin{tabular}{lrrrrrrrrrr}
\toprule
\multicolumn{1}{c}{\multirow{2}{*}{Method}} & \multicolumn{5}{c}{Best Checkpoint}  & \multicolumn{5}{c}{Last Checkpoint} \\
\cmidrule(r){2-6}
\cmidrule(r){7-11}
& Clean  & PGD  & AA  & T-Clean & T-PGD&  Clean & PGD  & AA & T-Clean & T-PGD \\
\midrule
Robal   & 44.27 & 19.67 & 16.78 & 18.4 & 8.0 & 46.46 & 16.28 & 14.73 & 23.3 & 6.7   \\
Robal-RA 
& 44.64 & 20.11 & 17.02 & 15.8 & 7.7 & 47.62 &  18.63 & 16.05 & 19.2 & 7.4
\\
Robal-AuA 
& 45.87 & 20.24 & 17.05 & 17.4 & 6.7 & 47.42 & 19.32 & 16.30 & 18.2 &7.4
\\
\midrule

Reat  & 45.38 & 18.04 & 15.73 & 33.1 & 12.4 & 45.48 & 15.36 & 14.07 & 31.5 & 9.1 \\
Reat-RA 
& 46.94 & 21.71 & 18.02 & 33.3 & 14.9  & 50.41 & 20.33 & 17.46 & 36.6 & 14.7
\\
Reat-AuA

& 47.86  & 23.09 & 19.43 & 34.0 & 16.7 & 50.56 & 22.20 & 18.60 & 36.5 & 16.5
\\
\midrule
AT-BSL   & 45.38 & 18.04 & 15.73 & 33.1 & 12.4 & 45.48 & 15.36 & 14.07 & 31.5 & 9.1 \\
AT-BSL-RA 
& 48.38 & 22.18 & 18.58 & 34.7 & 16.8& 50.33 & 20.29 & 17.42 & 37.1 & 14.6
\\
AT-BSL-AuA 
& 47.30 & 22.78 & 18.66& 34.2 & 16.4 & 50.57 & 21.98 & 18.45 & 36.8 & 16.3
\\
\midrule
\textbf{Ours}   
&  46.13 & 22.42 & 18.73 &  \textbf{38.9} & 17.9 & 47.22 & 21.82 & 18.53 & 37.9 & 17.6
\\
\textbf{Ours-RA} 
& 48.78 & 23.58 & 19.30 & 34.9 & 17.2 & \textbf{50.98}& 22.43& 18.80& \textbf{38.2}& 16.9\\
\textbf{Ours-AuA} & \textbf{50.14} & \textbf{24.60} & \textbf{20.08} & 36.4& \textbf{18.0} & 50.46 & \textbf{24.32} & \textbf{20.28} & 37.6 & \textbf{18.5}
\\

\bottomrule
\end{tabular}
\end{center}
\end{table*}

\begin{table*}[t]
\begin{center}
\caption{The clean accuracy and robustness with different Imbalance Ratio(IR) using ResNet-18 on CIFAR-100-LT. T-Clean and T-PGD are clean and 20-step of PGD accuracy on the tail class group.}
\label{tab:ablation_IR}
\setlength{\tabcolsep}{4pt}
 \renewcommand{\arraystretch}{1}
\begin{tabular}{llrrrrrrrrrrr}
\toprule
\multicolumn{1}{c}{\multirow{2}{*}{IR}} & \multicolumn{1}{c}{\multirow{2}{*}{Method}} & \multicolumn{5}{c}{Best Checkpoint}  & \multicolumn{5}{c}{Last Checkpoint} \\
\cmidrule(r){3-7}
\cmidrule(r){8-12}
& & Clean  & PGD  & AA  & T-Clean & T-PGD&  Clean & PGD  & AA & T-Clean & T-PGD \\

\midrule
\multicolumn{1}{c}{\multirow{4}{*}{50}} & RoBal & 33.52 & 14.56 & 12.27 & 1.9 & 0.9   
& 34.81& 12.16& 11.02& 4.6& 1.7
\\
& REAT & 26.62 & 13.73 & 10.64 & 9.2 & 4.1   
& 36.51& 12.28& 11.15& 19.1 & 4.4
\\
& AT-BSL & 30.06 & 13.80 & 10.91 & 10.3 & 4.2  
& 36.46 & 12.07 & 11.21& 17.8 & 4.4
\\
& \textbf{Ours} & \textbf{38.09} & \textbf{16.65} & \textbf{13.58} & \textbf{14.8} & \textbf{5.2}
& \textbf{38.56}& \textbf{16.08}& \textbf{13.52}&  \textbf{19.2} & \textbf{5.8}\\


\midrule
\multicolumn{1}{c}{\multirow{4}{*}{20}} & RoBal  & 40.08 & 16.91 & 14.28 & 12.1 & 5.4
& 41.28& 13.96& 12.70& 15.5& 5.1
\\
& REAT  & 33.17 & 15.82 & 13.01& 20.0& 9.2
& 41.73& 13.79& 12.58& 29.8& 7.7
\\
& AT-BSL  & 41.70 & 15.51 & 13.62 & 30.7 & 10.4
& 41.41 & 13.49 & 12.48 & 27.1 & 7.9 
\\
& \textbf{Ours}  & \textbf{42.54}& \textbf{19.66}& \textbf{16.36}& \textbf{33.2}& \textbf{13.3}& \textbf{42.64}& \textbf{19.24}& \textbf{15.97}& \textbf{32.5}& \textbf{13.7}\\
\midrule
\multicolumn{1}{c}{\multirow{4}{*}{10}} & RoBal & 44.27 & 19.67 & 16.26 & 16.7 & 7.6
& 46.46 & 16.28 & 14.73 & 23.3 & 6.7   
\\
& REAT & 45.73 & 18.22 & 15.82 & 34.4 & 12.2
& 45.53 & 15.64 & 14.27 & 33.0 & 10.5   
\\
& AT-BSL & 45.38 & 18.04  & 15.73 & 33.1 & 12.4 
& 45.48 & 15.36 & 14.07 & 31.5 & 9.1 
\\
& \textbf{Ours} & \textbf{47.22} & \textbf{21.82} & \textbf{18.53} & \textbf{37.9} & \textbf{17.6} & \textbf{47.22} & \textbf{21.82} & \textbf{18.53} & \textbf{37.9} & \textbf{17.6} 
\\

\midrule
\multicolumn{1}{c}{\multirow{4}{*}{5}} & RoBal & 49.49& 21.66 & 18.59 & 26.1& 11.9
& 51.56& 18.15& 16.58& 34.7& 9.9
\\
& REAT &49.48  & 21.95 & 18.98 & 39.3 & 18.3 & 49.76& 18.19& 16.65& 40.7& 14.1
\\
& AT-BSL & 49.75& 21.53 & 18.65 & 42.3 & 18.2 & 49.41 & 18.12 & 16.56 & 40.0 & 13.7 
\\
& \textbf{Ours} & \textbf{50.77} & \textbf{24.13}& \textbf{20.10}& \textbf{44.0} & \textbf{20.9}
& \textbf{51.92}& \textbf{25.00}& \textbf{21.14}& \textbf{43.8}& \textbf{20.4}
\\



\bottomrule
\end{tabular}
\end{center}
\end{table*}

\subsubsection{Different Imbalance Ratio}
In \Cref{tab:ablation_IR}, we conducted experiments using different imbalance ratios (IR).
As the IR increases, the number of tail classes decreases, leading to decreased robustness.
In all cases, our method outperforms the baseline in terms of both overall and tail robustness.
This indicates that our proposed framework generally performs well across different IR settings.

\subsubsection{Effect of Balanced Subset}

To evaluate the effect of the balanced subset $D_B$, we compare the performance of models trained with PGD-AT on $D_B$ against Robal, Reat, and BSL trained on $D$, which incorporate techniques like balanced softmax to address long-tailed distributions.
For simplicity, we denote $\text{PGD-AT}_{D_B}$ as the model trained with PGD-AT on the balanced subset $D_B$.
\begin{wrapfigure}{r}{0.45\textwidth}
  \begin{center}
    \includegraphics[width=0.4\textwidth]{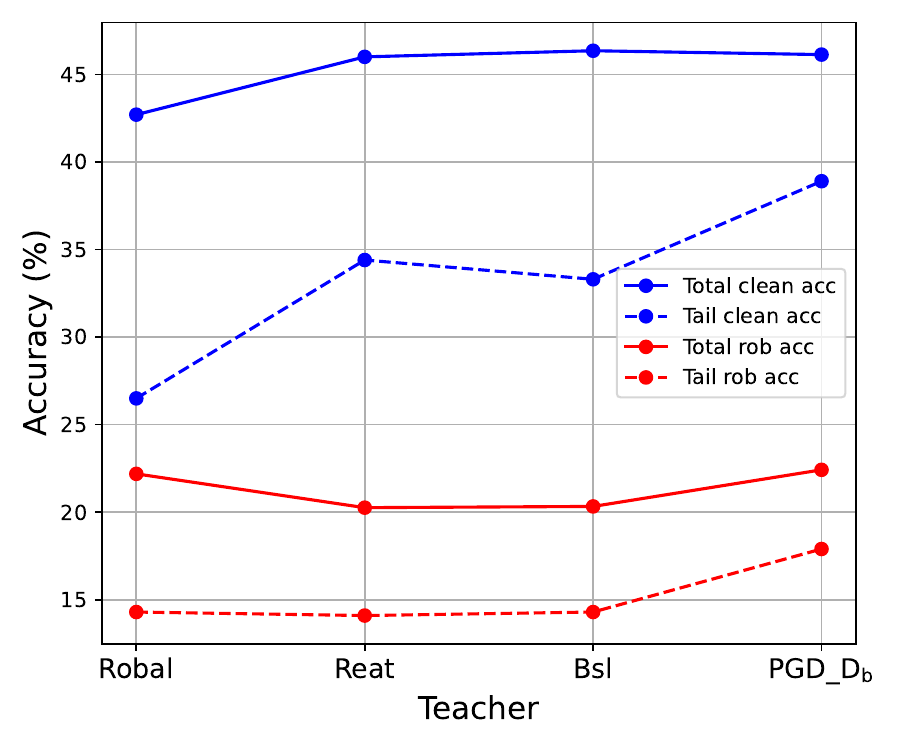}
  \end{center}
  \caption{Clean and robust accuracy according to different teachers. Robal, Reat, and Bsl were trained with 100 epochs, while $\text{PGD-AT}_{D_B}$ used a teacher trained with 30 epochs.}
  \label{fig:diff_teacher}
\end{wrapfigure}
As shown in \Cref{tab:balance_versus_full}, while the PGD-AT model trained on $D_B$ achieved the lowest overall performance, it demonstrated the best results for tail classes, T-Clean and T-PGD.
This suggests that even with fewer training epochs, the balanced subset is effective for improving performance on tail classes.
In \Cref{fig:diff_teacher}, we apply the same methods with different teachers where the performance is summarized in \Cref{tab:balance_versus_full}.
Interestingly, the best results were achieved when we utilize $\text{PGD-AT}_{D_B}$ as the teacher model, despite having the lowest overall performance.
Specifically, the performance on the tail classes highlights the effectiveness of the teacher trained on the balanced subset, as it demonstrates superior performance on the tail class compared to other methods. This underscores the utility of the balanced subset in improving tail class performance.
Additionally, although we trained the teacher for self-distillation using a simple method, PGD-AT, developing a more effective teacher remains an area for future work.

\begin{table*}[t]
\begin{center}
\caption{The clean accuracy and robustness using ResNet-18 on CIFAR-100-LT. T-Clean and T-PGD represent clean and 20-step PGD accuracy on a tail class.  }
\label{tab:balance_versus_full}
\setlength{\tabcolsep}{7pt}
 \renewcommand{\arraystretch}{1}
\begin{tabular}{l|l|rrrr}
\toprule
 Dataset & Method & Clean & PGD & T-Clean & T-PGD \\
\midrule
\multirow{3}{*}{$D$}  
                    & Robal   & 44.27 &  \textbf{19.67} & 18.4 & 8.0 \\
                    & Reat  & \textbf{45.73} &  18.22 & 32.2 &  11.4 \\
                    & BSL  & 45.38 &  18.04 & 33.1 & 12.4\\
\midrule
$D_B$              & PGD-AT   & 35.71 & 14.94 & \textbf{34.6}  & \textbf{14.1} \\
\bottomrule
\end{tabular}
\end{center}
\end{table*}

\section{Conclusion}

Building on the observation that adversarial training methods inherently struggle with tail classes, we propose effective strategies to address the lower performance on these classes. We first train a balanced self-teacher and subsequently perform knowledge distillation from this self-teacher.
This approach leads to significant improvements in long-tailed adversarial training, enhancing both overall robustness and tail class 
robustness.

\noindent \textbf{Discussion}
It is well known that adversarial training varies in difficulty across classes, and performance also differs by class. This presents a fairness issue, indicating that in robustness, not only the number of data points but also the intrinsic difficulty of each class plays a role. While this paper focuses solely on data quantity, designing robust models that account for class-level fairness remains an area for future work.

\bibliography{AT}
\bibliographystyle{iclr2025_conference}
\appendix
\section{Theoritical Proof}
\label{sec:proof}

\begin{lemma}
\label{lemma:identical_weight_natural}
    Given the data distribution $\mathcal{S}$, an optimal natural classifier that minimizes the overall standard error has optimal weight that satisfies $w_1 = w_2 = \cdots = w_n.$
\end{lemma}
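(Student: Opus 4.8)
The plan is to write the overall standard error as an explicit function of the weight vector and bias, reduce the dependence on $\boldsymbol{w}$ to a single scalar, and then apply Cauchy--Schwarz.

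First I would decompose the error by the law of total probability, using the class priors of $\mathcal{S}$:
\[
\mathcal{R}_\text{std}(f_{\boldsymbol{w},b}) = \frac{r}{r+1}\Pr\!\big(f_{\boldsymbol{w},b}(\boldsymbol{x}) = -1 \mid y=+1\big) + \frac{1}{r+1}\Pr\!\big(f_{\boldsymbol{w},b}(\boldsymbol{x})=+1\mid y=-1\big).
\]
Conditioned on $y$, the score $\sum_{k} w_k x_k + b$ is Gaussian with mean $\eta y \sum_k w_k + b$ and variance $\|\boldsymbol{w}\|_2^2$, since the $x_k$ are independent $\mathcal{N}(\eta y,1)$. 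Writing $S := \sum_k w_k$ and $W := \|\boldsymbol{w}\|_2$, each conditional error is a value of the standard normal CDF $\Phi$, giving
\[
\mathcal{R}_\text{std}(f_{\boldsymbol{w},b}) = \frac{r}{r+1}\,\Phi\!\left(\frac{-\eta S - b}{W}\right) + \frac{1}{r+1}\,\Phi\!\left(\frac{-\eta S + b}{W}\right).
\]

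The key observation is that this depends on $\boldsymbol{w}$ only through the normalized alignment $t := S/W$ and on $b$ only through $c := b/W$, and that for any fixed non-degenerate $\boldsymbol{w}$ the bias can realize any $c \in \mathbb{R}$, so the two may be optimized independently. Since $\Phi$ is strictly increasing, both summands are strictly decreasing in $t$ for every fixed $c$ (the $t$-derivative is a negative multiple of $\eta>0$ times Gaussian densities, hence negative uniformly in $c$). Therefore any minimizer must maximize $t = \frac{\sum_k w_k}{\|\boldsymbol{w}\|_2}$, irrespective of the eventual optimal bias.

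Finally, Cauchy--Schwarz gives $\sum_k w_k = \langle \boldsymbol{w}, \vone\rangle \le \|\boldsymbol{w}\|_2\sqrt{n}$, so $t \le \sqrt{n}$ with equality if and only if $\boldsymbol{w}$ is a positive multiple of $\vone$, i.e. $w_1 = \cdots = w_n$. I would dispose of the degenerate cases separately: $\boldsymbol{w}=\vzero$ yields a constant classifier with error $\tfrac{1}{r+1}$, which is strictly worse, and a reversed orientation ($S<0$) only increases both arguments of $\Phi$. I expect the only point needing care to be the decoupling/monotonicity step — verifying that the bias fully absorbs the scale $W$ so that the normalized alignment $t$ is the sole remaining degree of freedom in $\boldsymbol{w}$; once that is established, the equality condition in Cauchy--Schwarz immediately gives the claim.
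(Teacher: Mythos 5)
Your proof is correct, and it takes a genuinely different route from the paper's. The paper argues by contradiction with an exchange step: it assumes an optimal classifier has $w_i < w_j$ for some pair $i \neq j$, replaces $w_i$ by $w_j$, and simply asserts that the modified classifier has strictly smaller standard error. You instead derive the closed-form error $\frac{r}{r+1}\Phi\left(\frac{-\eta S - b}{W}\right) + \frac{1}{r+1}\Phi\left(\frac{-\eta S + b}{W}\right)$ with $S = \sum_k w_k$ and $W = \|\boldsymbol{w}\|_2$, note that the feasible pairs $(t,c) = (S/W,\, b/W)$ form a product set so the two can be optimized separately, prove strict monotonicity in $t$ for each fixed $c$, and conclude via the equality case of Cauchy--Schwarz. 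What your route buys is precisely the step the paper leaves unproven: the swap's error reduction is given no justification there, and taken literally it is false --- with $n=2$, $b=0$ and $w_i = -2 < w_j = -1$, the swap moves $S/W$ from $-3/\sqrt{5}$ down to $-\sqrt{2}$, which \emph{increases} the error $\Phi(-\eta S/W)$ for every $r \geq 1$ --- so the exchange argument tacitly needs sign information about the optimum that the paper never establishes, whereas your explicit handling of the reversed-orientation and degenerate cases covers exactly this. Your normalization also recovers, as a by-product, the same error formula the paper re-derives at the start of its proof of \Cref{theorem:1} (where it sets $\|\boldsymbol{w}\|_2 = 1$), and your argument transfers verbatim to the robust analogue (\Cref{lemma:identical_weight_robust}) by replacing $\eta$ with $\eta - \epsilon$. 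The only spot where you gesture rather than compute is the claim that the zero-weight constant classifier (error $\tfrac{1}{r+1}$) is strictly beaten; this does follow from your own formula by taking $t=\sqrt{n}$ and $c$ large, since the head-class term $\Phi(-\eta\sqrt{n}-c)$ vanishes faster than the tail-class term approaches $\tfrac{1}{r+1}$, but it is worth a line.
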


\begin{proof}
    Let's assume, for the sake of contradiction, that the optimal weights do not satisfy the given condition. In other words, for some $i\neq j$ and $i, j \in \{1, 2, \cdots, n\}$, we assume if there exist $w_i < w_j$. Then, we obtain the following standard error
    \begin{align}
        \mathcal{R}_{nat}(f) &= \Pr(y=-1)\cdot\Pr \left(\sum^n_{k\neq i, k\neq j} w_k \mathcal{N}(-\eta, 1) + b + w_i\mathcal{N}(-\eta, 1)+ w_j\mathcal{N}(-\eta, 1) > 0\right)\nonumber\\
        &+\Pr(y=+1)\cdot\Pr \left(\sum^n_{k\neq i, k\neq j} w_k \mathcal{N}(+\eta, 1) + b + w_i\mathcal{N}(+\eta, 1)+ w_j\mathcal{N}(+\eta, 1) < 0\right)
    \end{align}
    However, if we define a new classifier $f'$, which has the same weight vector as classfier $f$ but uses $w_j$ to replace $w_i$. The resulting standard error for the new classifier $f'$ can be obtained as follows
    \begin{align}
        \mathcal{R}_{nat}(f') &= \Pr(y=-1)\cdot\Pr \left(\sum^n_{k\neq i, k\neq j} w_k \mathcal{N}(-\eta, 1) + b + w_j\mathcal{N}(-\eta, 1)+ w_j\mathcal{N}(-\eta, 1) > 0\right)\nonumber\\
        &+\Pr(y=+1)\cdot\Pr \left(\sum^n_{k\neq i, k\neq j} w_k \mathcal{N}(+\eta, 1) + b + w_j\mathcal{N}(+\eta, 1)+ w_j\mathcal{N}(+\eta, 1) < 0\right)
    \end{align}
Given that $w_i < w_j$, the $f'$ has a smaller error than $f$, which contradicts the assumption that $f$ is the optimal classifier with the least error.
\end{proof}

\begin{lemma}
\label{lemma:identical_weight_robust}
    Given the data distribution $\mathcal{S}$, an optimal robust classifier that minimizes the robust error has optimal weight that satisfies $w_1 = w_2 = \cdots = w_n$.  
\end{lemma}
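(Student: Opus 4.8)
The plan is to reduce the robust minimization to the natural one already handled in \Cref{lemma:identical_weight_natural}. The first step is to evaluate the worst-case perturbation of the linear classifier $f_{\boldsymbol{w},b}$. Since the adversary is constrained to $\|\boldsymbol{\delta}\|_\infty \le \epsilon$, the perturbation that most helps flip the sign of $\sum_k w_k x_k + b$ sets $\delta_k = \pm\epsilon\,\sign(w_k)$, shifting the decision score by exactly $\epsilon\|\boldsymbol{w}\|_1 = \epsilon\sum_k |w_k|$ in the harmful direction. Thus a tail point ($y=-1$) is robustly misclassified iff $\sum_k w_k x_k + b + \epsilon\|\boldsymbol{w}\|_1 > 0$, and a head point ($y=+1$) iff $\sum_k w_k x_k + b - \epsilon\|\boldsymbol{w}\|_1 < 0$.

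Next I would write the robust error in closed form. With $x_k \sim \mathcal{N}(y\eta,1)$ i.i.d., the score $\sum_k w_k x_k$ is Gaussian with mean $y\eta S$ and variance $V$, where $S=\sum_k w_k$ and $V=\sum_k w_k^2$, so that
\[
\mathcal{R}_{rob}(f) = \frac{1}{r+1}\,\Phi\!\left(\frac{-\eta S + b + \epsilon\|\boldsymbol{w}\|_1}{\sqrt{V}}\right) + \frac{r}{r+1}\,\Phi\!\left(\frac{-\eta S - b + \epsilon\|\boldsymbol{w}\|_1}{\sqrt{V}}\right).
\]
A sign-normalization step then simplifies this: because all coordinates are identically distributed and $\Phi$ is increasing, flipping any negative $w_k$ to $|w_k|$ leaves $V$ and $\|\boldsymbol{w}\|_1$ unchanged while strictly increasing $S$, which decreases both arguments and hence the error. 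Therefore at the optimum we may assume $w_k \ge 0$ for all $k$, so that $\|\boldsymbol{w}\|_1 = S$.

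Substituting $\|\boldsymbol{w}\|_1 = S$ collapses the robust error into precisely the natural-error functional of \Cref{lemma:identical_weight_natural}, but with $\eta$ replaced by the effective margin $\eta-\epsilon$, which is positive since $0<\epsilon<\eta$. Consequently the replacement/contradiction argument used there applies verbatim. Equivalently, for fixed $V$ and after optimizing over $b$, the error is a decreasing function of $(\eta-\epsilon)S/\sqrt{V}$, and by Cauchy--Schwarz $S/\sqrt{V} \le \sqrt{n}$ with equality iff all $w_k$ coincide; either route yields $w_1 = w_2 = \cdots = w_n$.

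The main obstacle I anticipate is the first step: correctly identifying the worst-case $\ell_\infty$ perturbation and the resulting $+\epsilon\|\boldsymbol{w}\|_1$ penalty, together with the sign-normalization, since only after establishing $\|\boldsymbol{w}\|_1 = S$ does the clean reduction to \Cref{lemma:identical_weight_natural} become available. Once that reduction is in place, the remainder is a direct transcription of the natural-case argument with $\eta-\epsilon$ in the role of $\eta$.
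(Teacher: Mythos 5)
Your proof is correct, but it takes a genuinely different and more self-contained route than the paper's. The paper disposes of this lemma in one line---it invokes ``the same argument'' as \Cref{lemma:identical_weight_natural}, i.e., the swap/contradiction step in which a smaller weight $w_i$ is replaced by a larger $w_j$---and never spells out how the adversarial perturbation term behaves under that swap. You instead make the adversary explicit (the worst-case $\ell_\infty$ perturbation shifts the score by $\epsilon\|\boldsymbol{w}\|_1$), write the robust error in closed form, normalize signs so that $\|\boldsymbol{w}\|_1 = S = \sum_k w_k$ at any optimum, and thereby reduce the robust objective to the natural one with effective margin $\eta-\epsilon$; the Cauchy--Schwarz endgame ($S/\sqrt{V}\le\sqrt{n}$, with equality iff all coordinates coincide) then closes the argument with no replacement step at all. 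This buys two things. First, rigor: the paper's swap claim is not literally true for arbitrary sign patterns---with $b=0$, $r=1$, replacing $w_i=-2$ by $w_j=-1$ in $\boldsymbol{w}=(-2,-1)$ \emph{increases} the error, since $S$ and $V$ both change---so your sign-normalization step is exactly what is needed to make any such monotonicity argument sound, and the Cauchy--Schwarz route avoids the issue entirely. Second, transparency: your reduction exhibits the robust problem as the natural problem with $\eta \mapsto \eta-\epsilon$, which is precisely the structure the paper exploits later when computing $b^*_{rob}$ and the tail-class errors in \Cref{theorem:1}, so under your approach the lemma and the theorem become one and the same computation.
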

Similar to the \Cref{lemma:identical_weight_natural}, it can be easily proved with the same argument.

\subsection{Proof of \Cref{theorem:1}}
\begin{proof}
    By the \Cref{lemma:identical_weight_natural}, the optimal classifier $f_{nat}$ for standard error has optimal weight of $w_1 = w_2 = \cdots = w_n.$
    For simplicity, we assume l2-norm of $\boldsymbol{w}$ = 1, \ie, $\boldsymbol{w} = (1/\sqrt{n}, 1/\sqrt{n}, \dots, 1/\sqrt{n}).$ following existing works \cite{FairAT, DAFA}.
    Then, the standard errors of $f_{nat}$ can be formulated as follows.
    \begin{align}
         \mathcal{R}_{nat}(f_{nat}) &= \Pr(y=+1)\cdot\mathcal{R}^{+1}_{nat}(f_{nat}) + \Pr(y=-1)\cdot\mathcal{R}^{-1}_{nat}(f_{nat})\nonumber \\
         &=\frac{r}{r+1} \Pr \left(f(\boldsymbol{x}) \neq y | y = +1\right) + \frac{1}{r+1}\Pr \left(f(\boldsymbol{x}) \neq y | y = -1\right)\nonumber\\
         &=\frac{r}{r+1} \Pr \left(\sum^n_{k=1} \frac{1}{\sqrt{n}} \mathcal{N}(+\eta, 1) + b  < 0\right) + \frac{1}{r+1}\Pr \left(\sum^n_{k=1} \frac{1}{\sqrt{n}} \mathcal{N}(-\eta, 1) + b  > 0\right)\nonumber\\ 
         &=\frac{r}{r+1}\cdot\Phi(-\sqrt{n}\eta -b ) +  \frac{1}{r+1}\cdot\Phi(-\sqrt{n}\eta + b)
    \end{align}
    Here, $\Phi$ represents the cumulative distribution function of the standard normal distribution.
    To determine the optimal value of $b$, we solve the equation $d \mathcal{R}_{nat}(f_{nat})/db = 0$.
    \begin{align}
        \frac{d \mathcal{R}_{nat}(f_{nat})}{d b} = -\frac{r}{r+1}\cdot\phi(-\sqrt{n}\eta -b ) +  \frac{1}{r+1}\cdot\phi(-\sqrt{n}\eta + b) &= 0\nonumber\\
        -r\cdot\phi(-\sqrt{n}\eta -b ) +  \phi(-\sqrt{n}\eta + b) &= 0\nonumber\\
        -r\cdot\exp\left(-\frac{1}{2}(-\sqrt{n}\eta -b )^2\right) +  \exp\left(-\frac{1}{2}(-\sqrt{n}\eta +b )^2\right) &= 0
    \end{align}
    Here, $\phi$ represents the standard normal distribution function.
    Therefore, the optimal $b^*_{nat}$ for natural classifier is follows,
    \begin{equation}
        b^*_{nat} = \frac{\ln r}{2\sqrt{n}\eta}.
    \end{equation}
    By using the optimal natural classifier, the standard error of the tail class can be formulated as follows,
    \begin{equation}
        \mathcal{R}^{-1}_{nat}(f^*_{nat}) = \Phi\left(-\sqrt{n}\eta + \frac{\ln r}{2\sqrt{n}\eta}\right).
    \end{equation}
    Then, the robust error of the tail class with optimal natural classifier can be calculated as follows,
    \begin{align}
        \mathcal{R}^{-1}_{rob}(f^*_{nat}) &= \Pr(\exists \boldsymbol{\delta} \text{ with } \| \boldsymbol{\delta} \|_\infty \leq \epsilon \text{ s.t. } f^*_{nat}(\boldsymbol{x} + \boldsymbol{\delta}) \neq y | y=-1) \nonumber\\
        &=\Pr \left(\sum^n_{k=1} \frac{1}{\sqrt{n}}(x_i +\epsilon) + b^*_{nat}  > 0\right) \nonumber\\
        &=\Pr \left(\sum^n_{k=1} \frac{1}{\sqrt{n}} \mathcal{N}(-\eta +\epsilon, 1) + b^*_{nat}  > 0\right) \nonumber\\
        &=\Pr \left( \mathcal{N}(0, 1) < -\sqrt{n}(\eta - \epsilon) + b^*_{nat} \right) \nonumber\\
        &= \Phi\left(-\sqrt{n}(\eta-\epsilon) + \frac{\ln r}{2\sqrt{n}\eta}\right).
    \end{align}
    Similarly, based on the \Cref{lemma:identical_weight_robust}, the optimal classifier $f_{rob}$ for robust error has optimal weight of $w_1 = w_2 = \cdots = w_n = 1/\sqrt{n}$.
    Therefore, the robust errors of $f_{rob}$ can be formulated as follows with adversarial noise $\epsilon$ satisfying $0<\epsilon<\eta$
    \begin{align}
\mathcal{R}_{rob}(f_{rob}) &= \Pr(y=+1)\cdot\mathcal{R}^{+1}_{rob}(f_{rob}) + \Pr(y=-1)\cdot\mathcal{R}^{-1}_{rob}(f_{rob})\nonumber \\
&=\frac{r}{r+1}\cdot\Pr \left(\sum^n_{k=1} \frac{1}{\sqrt{n}} \mathcal{N}(+\eta -\epsilon, 1) + b  < 0\right) \nonumber\\
      &\quad+\frac{1}{r+1} \cdot\Pr \left(\sum^n_{k=1} \frac{1}{\sqrt{n}} \mathcal{N}(-\eta +\epsilon, 1) + b  > 0\right)\nonumber\\
&= \frac{r}{r+1} \cdot \Pr \left( \mathcal{N}(0, 1) < -\sqrt{n}(\eta - \epsilon) - b \right) \nonumber\\
      &\quad + \frac{1}{r+1} \cdot \Pr \left( \mathcal{N}(0, 1) < -\sqrt{n}(\eta - \epsilon) + b \right)\nonumber \\
         &=\frac{r}{r+1}\cdot\Phi(-\sqrt{n}(\eta-\epsilon) -b ) +  \frac{1}{r+1}\cdot\Phi(-\sqrt{n}(\eta-\epsilon) + b)
    \end{align}
    To determine the optimal value of $b$, we solve the equation $d \mathcal{R}_{rob}(f_{rob})/db = 0$.
    \begin{align}
        \frac{d \mathcal{R}_{rob}(f_{rob})}{d b} = -\frac{r}{r+1}\cdot\phi(-\sqrt{n}(\eta-\epsilon) -b ) +  \frac{1}{r+1}\cdot\phi(-\sqrt{n}(\eta-\epsilon) + b) &= 0\nonumber\\
        -r\cdot\phi(-\sqrt{n}(\eta-\epsilon) -b ) +  \phi(-\sqrt{n}(\eta-\epsilon) + b) &= 0\nonumber\\
        -r\cdot\exp\left(-\frac{1}{2}(-\sqrt{n}(\eta-\epsilon) -b )^2\right) +  \exp\left(-\frac{1}{2}(-\sqrt{n}(\eta-\epsilon) +b )^2\right) &= 0
    \end{align}
    Therefore, the optimal $b^*_{rob}$ for robust classifier is follows,
    \begin{equation}
        b^*_{rob} = \frac{\ln r}{2\sqrt{n}(\eta-\epsilon)}.
    \end{equation}
    Then, the standard and robust error of the tail class with optimal robust classfier can be formulated as follows,
    \begin{align}
        \mathcal{R}^{-1}_{nat}(f^*_{rob}) &= \Phi\left(-\sqrt{n}\eta + \frac{\ln r}{2\sqrt{n}(\eta-\epsilon)}\right), \\
        \mathcal{R}^{-1}_{rob}(f^*_{rob}) &= \Phi\left(-\sqrt{n}(\eta-\epsilon) + \frac{\ln r}{2\sqrt{n}(\eta-\epsilon)}\right).
    \end{align}
\end{proof}

\section{Additional Experiments}

\subsection{Experiments on another architecture.}

We conducted experiments using WideResNet-34-10. Similar to the results of ResNet-18 in the main paper, our method consistently demonstrated superior performance.
Notably, on both CIFAR-10-LT and CIFAR-100-LT datasets, significant performance improvements were observed in both T-Clean and T-PGD settings.
While RoBal exhibited a marginally higher clean accuracy in the case of the best checkpoint on CIFAR-10-LT, the difference compared to our method is negligible. However, our method achieved approximately a 5\% point improvement in robust accuracy against auto attack on CIFAR-10-LT.
In the CIFAR-100-LT dataset, our method demonstrated the best performance in terms of both clean accuracy and robustness across all classes.
Additionally, the improvements in T-Clean and T-PGD demonstrate that our method is more suitable for handling long-tail distributions.

\begin{table}[h]
\begin{center}
\caption{The clean accuracy and robustness for various algorithms using WideResNet-34-10 on CIFAR-10-LT.}
\label{tab:main_cifar10_wrn}
\setlength{\tabcolsep}{4pt}
 \renewcommand{\arraystretch}{1}
\begin{tabular}{lrrrrrrrrrr}
\toprule
 \multicolumn{1}{c}{\multirow{2}{*}{Method}} & \multicolumn{5}{c}{Best Checkpoint}  & \multicolumn{5}{c}{Last Checkpoint} \\
\cmidrule(r){2-6}
\cmidrule(r){7-11}
 & Clean  & PGD  & AA  & T-Clean & T-PGD&  Clean & PGD  & AA & T-Clean & T-PGD \\
\midrule
PGD-AT  & 58.86 & 30.57 & 29.43 & 18.5 & 2.1 & 59.10 & 26.3 & 25.66 & 19.0 & 1.9\\
TRADES  & 51.93 & 30.45 & 29.20 & 4.5 & 0.3 & 55.36 & 27.62 & 26.99 & 19.2 & 2.9\\
MART & 48.92 & 31.45 & 29.85 & 9.5 & 0.9 & 54.81 & 27.25 & 26.29 & 23.1 & 2.0\\
AWP  & 51.69 & 32.42 & 30.35 & 5.3 & 0.2 & 51.89 & 29.19 & 27.45 & 10.9& 0.6  \\
\midrule
RoBal   & \textbf{74.46}& 32.82& 25.72& 71.5& 22.8& 70.03& 24.74& 23.09& 50.6& 5.7\\
REAT   & 73.16& 33.45& 28.71 & 66.4& 20.8& 64.11 & 25.90 & 25.00 & 31.7 & 3.6  \\
AT-BSL  & 73.23 & 35.08 & 32.26 & 66.4 & 18.9 & 66.23 & 26.87 & 25.98 & 40.6 & 4.3\\
\textbf{Ours} & 73.97& \textbf{39.25}& \textbf{35.97}& \textbf{74.3}& \textbf{33.7}& \textbf{72.38}& \textbf{31.15} & \textbf{29.10} & \textbf{60.4}& \textbf{12.7}\\
\bottomrule
\end{tabular}
\end{center}
\vspace{-0.15 in}
\end{table}

\begin{table}[h]
\begin{center}
\caption{The clean accuracy and robustness for various algorithms using WideResNet-34-10 on CIFAR-100-LT.}
\label{tab:main_cifar100_wrn}
\setlength{\tabcolsep}{4pt}
 \renewcommand{\arraystretch}{1}
\begin{tabular}{lrrrrrrrrrr}
\toprule
 \multicolumn{1}{c}{\multirow{2}{*}{Method}} & \multicolumn{5}{c}{Best Checkpoint}  & \multicolumn{5}{c}{Last Checkpoint} \\
\cmidrule(r){2-6}
\cmidrule(r){7-11}
 & Clean  & PGD  & AA  & T-Clean & T-PGD&  Clean & PGD  & AA & T-Clean & T-PGD \\
\midrule
PGD-AT  & 47.48 & 19.36 & 17.79 & 25.4 & 8.7 & 46.09 & 16.51 & 15.67 & 24.3 & 7.2\\
TRADES  & 42.67 & 20.89 & 18.42 & 18.3 & 6.9 & 43.99  & 18.53 & 17.51 & 19.9 & 7.4 \\
MART  & 41.54 & 21.52 & 18.83 & 19.2 & 9.2 & 43.08 & 17.00 & 15.84 & 22.8 & 7.8 \\
AWP  & 45.53 & 23.23 & 19.92 & 20.4 & 7.9 & 47.05 & 21.97 & 19.21 & 23.1& 8.8  \\
\midrule
RoBal   & 49.06 & 18.23 & 16.79 & 27.6 & 9.4 & 46.92 & 15.48 & 14.69 & 28.0 & 6.8 \\
REAT   & 49.06 & 20.00 & 18.08 & 34.4 & 12.2 & 47.65 & 16.95 & 15.60 & 33.6 & 9.8\\
AT-BSL & 50.05 & 18.96 & 17.10 & 38.3 & 13.3 & 47.95 & 16.40 & 15.31 & 32.2 & 9.5\\
\textbf{Ours} & \textbf{50.55} & \textbf{23.43} & \textbf{20.16} & \textbf{38.4} & \textbf{19.5} & \textbf{50.87} & \textbf{22.21} & \textbf{19.44} & \textbf{42.5} & \textbf{18.4}\\
\bottomrule
\end{tabular}
\end{center}
\vspace{-0.15 in}
\end{table}

\subsection{Additional experiment of more training epochs}

Since we employed additional training epochs for self-distillation, we also trained the baselines with more epochs and summarized the results in \Cref{tab:more_epochs}.
The results showed that increasing the training epochs for the baselines did not lead to performance improvements; in REAT, performance even declined when more training epochs were utilized.
As a result, it is clear that the efficacy of our approach is not solely a consequence of increasing the number of training epochs.

\begin{table}[t]
\begin{center}
\caption{The clean accuracy and robustness for various algorithms using ResNet-18 on CIFAR-100-LT. T-Clean and T-PGD are clean and PGD-20 accuracy on the tail class group.}
\label{tab:more_epochs}
\setlength{\tabcolsep}{2pt}
 \renewcommand{\arraystretch}{1}
\begin{tabular}{lrrrrrrrrrr}
\toprule
 \multicolumn{1}{c}{\multirow{2}{*}{Method}} & \multicolumn{5}{c}{Best Checkpoint}  & \multicolumn{5}{c}{Last Checkpoint} \\
\cmidrule(r){2-6}
\cmidrule(r){7-11}
 & Clean  & PGD  & AA  & T-Clean & T-PGD&  Clean & PGD  & AA & T-Clean & T-PGD \\
\midrule
RoBal (100 epochs)   & 44.27 & 19.67 & 16.78 & 18.4 & 8.0 & 46.46 & 16.28 & 14.73 & 23.3 & 6.7   \\
RoBal (200 epochs)   & 44.20 & 19.70 & 17.01 & 17.5 & 8.1 & 45.60 & 15.06 & 14.03 & 24.1 & 6.7   \\
\midrule
REAT (100 epochs)  & 45.73 & 18.22 & 15.82 & 32.2  & 11.4  & 45.53 & 15.64 & 14.27 & 33.0 & 10.5   \\
REAT (200 epochs)  & 44.67 & 16.48 & 14.53 & 29.6  & 10.3  & 44.80 & 14.86 & 13.64 & 28.9 & 7.9   \\
\midrule
AT-BSL (100 epochs) & 45.38 & 18.04 & 15.73 & 33.1 & 12.4 & 45.48 & 15.36 & 14.07 & 31.5 & 9.1 \\
AT-BSL (200 epochs) & 45.01  & 17.19  & 14.56 & 28.9 & 9.7  & 44.04 & 14.23  & 13.23 & 28.5 & 7.2  \\
\midrule
\textbf{Ours}  & \textbf{46.13} & \textbf{22.42} & \textbf{18.73} & \textbf{38.9} & \textbf{17.9} & \textbf{47.22} & \textbf{21.82} & \textbf{18.53} & \textbf{37.9} & \textbf{17.6} 
\\
\bottomrule

\end{tabular}
\end{center}
\end{table}

\begin{figure}[H]
 \centering
 \begin{subfigure}[b]{0.47\textwidth}
   \centering
   \includegraphics[width=\textwidth]{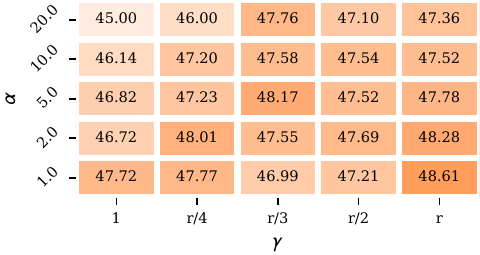}
   \caption{Clean accuracy}
 \end{subfigure}
 \hfill
 \begin{subfigure}[b]{0.47\textwidth}
   \centering
   \includegraphics[width=\textwidth]{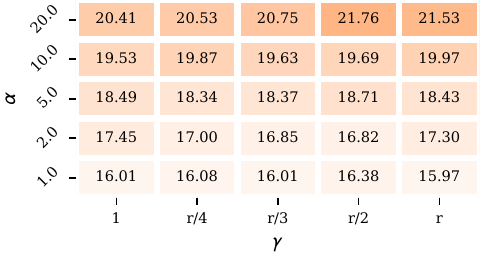}
   \caption{Robust accuracy}
 \end{subfigure}
 \caption{Hyperparmeter sensitivity of entire class }
 \label{fig:hyper_sensitivity_full}
\end{figure}

\begin{figure}[H]
 \centering
 \begin{subfigure}[b]{0.47\textwidth}
   \centering
   \includegraphics[width=\textwidth]{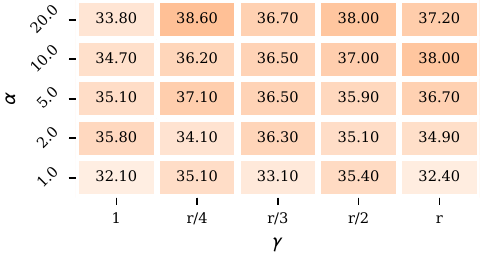}
   \caption{Clean accuracy}
 \end{subfigure}
 \hfill
 \begin{subfigure}[b]{0.47\textwidth}
   \centering
   \includegraphics[width=\textwidth]{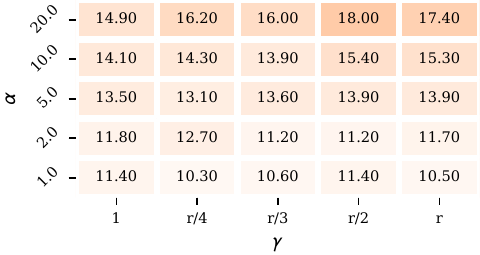}
   \caption{Robust accuracy}
 \end{subfigure}
 \caption{Hyperparmeter sensitivity of tail class }
  \label{fig:hyper_sensitivity_tail}
\end{figure}

\subsection{Sensitivity of Hyperparameter}
In \Cref{fig:hyper_sensitivity_full}, we experiment with the sensitivity of the self-distillation weight parameter, $\alpha$, and the sampling rate, $\gamma$, where  $r$ is an imbalance ratio between the class with the largest number of samples and the class with the smallest number of samples.
We can see that as $\alpha$ increases, robustness improves, but clean accuracy slightly decreases.
This indicates a trade-off between robustness and clean accuracy, which is expected given the use of adversarial distillation techniques.
In the case of $\gamma$, it did not significantly impact performance. However, when $\gamma$ is larger—meaning more samples are used to train the self-teacher—both clean accuracy and robustness showed improvement.

In \Cref{fig:hyper_sensitivity_tail}, we plot the tail class performance. 
In this case, we observed that as $\alpha$ increases, \ie, as the weight of the loss for knowledge distillation from the balanced self-teacher increases, the clean and robust performance of the tail class improves.
The sensitivity to $\gamma$ was not significant.

\subsection{Variance of multiple runs}
\begin{table}[h]
\begin{center}
\caption{The clean accuracy and robustness for various algorithms using ResNet18 on CIFAR-100-LT.}
\label{tab:mutiple_runs}
\setlength{\tabcolsep}{10pt}
\renewcommand{\arraystretch}{1}
\begin{tabular}{ccccc}
\toprule
Runs & Clean & PGD & T-Clean & T-PGD \\
\midrule
1 & 46.13 & 22.42 &  38.9 & 17.9 \\
2 & 46.57 & 22.23 &  37.5 & 17.7 \\
3 &  46.47 & 22.27 &  37.8 & 17.8 \\
4 & 46.59 & 22.12 & 38.8 & 17.8 \\
5 & 46.01 & 22.52 &  38.9 & 18.0 \\

\midrule
Average & 46.35 & 22.31 & 38.38 & 17.84 \\
Standard deviation & 0.27 & 0.15 & 0.68 & 0.11  \\
\bottomrule
\end{tabular}
\end{center}
\vspace{-0.15 in}
\end{table}

In \Cref{tab:mutiple_runs}, we conducted five experiments and computed the mean and standard deviation to assess the impact of randomness. The results show that the standard deviations are relatively small, indicating that the model's performance is consistent across different runs. This suggests that the observed improvements are robust and not significantly influenced by random fluctuations in the training process.

\subsection{Class-wise robustness.}
\begin{figure}[h]
 \centering

   \includegraphics[width=0.75\textwidth]{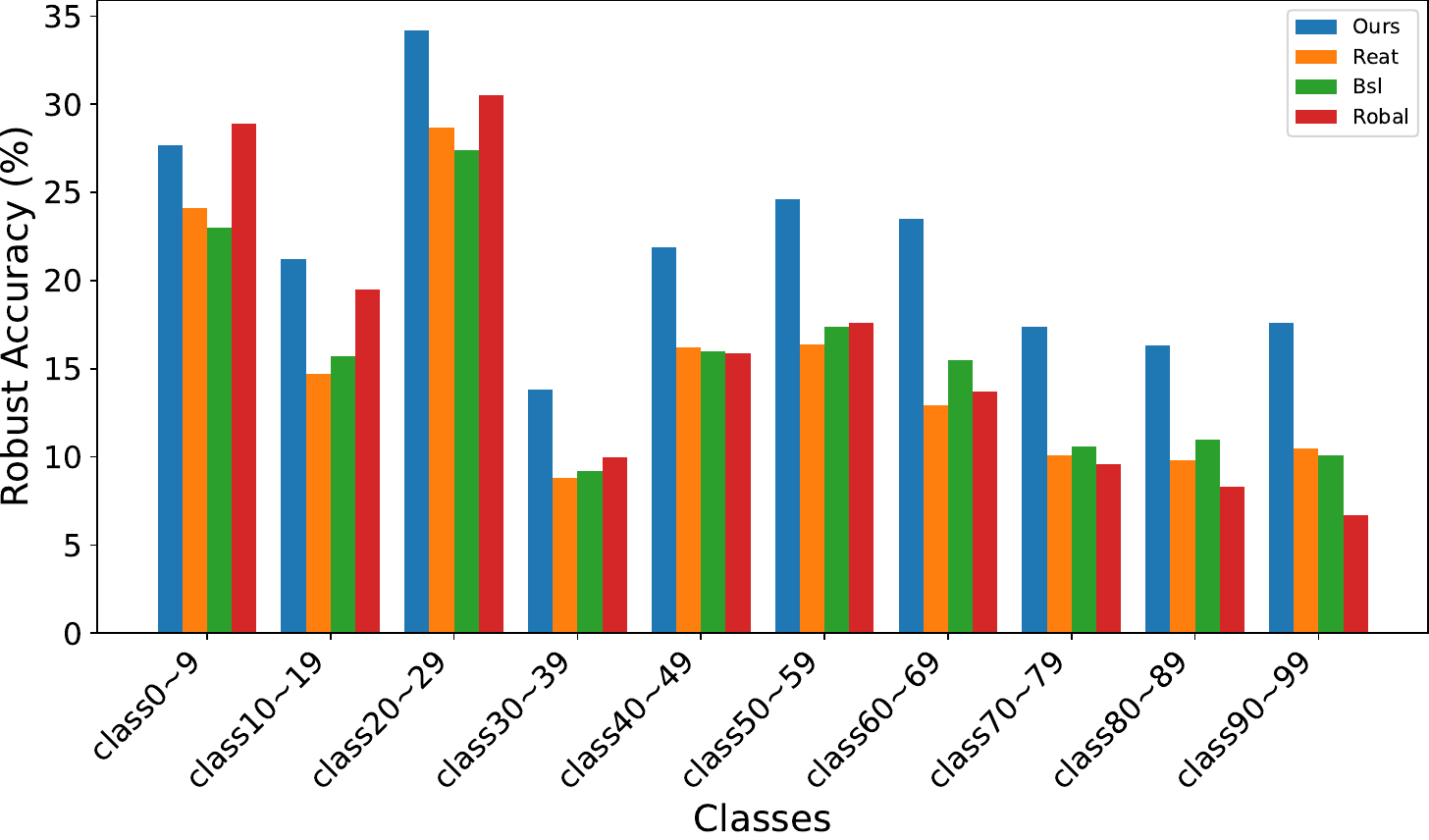}
 \caption{Class-wise robustness.}
   \label{fig:class_wise}
\end{figure}
In \Cref{fig:class_wise}, we divided the classes of CIFAR-100 into 10 groups and measured the robustness across them. As we move from class 0 to class 99, the number of data points decreases. Our method demonstrated higher robustness across all class groups compared to the baseline. Notably, it achieved the best performance in all groups except the first group. In contrast, Robal showed strong performance on the first group (head classes) but the worst performance on the last group (tail classes).

\end{document}